\documentclass[8pt]{article}
\usepackage{graphicx}
\usepackage{pgfplots}
\usepackage{newlfont}
\usepackage{tikz}
\usepackage{enumitem}
\usepackage{rotating}
\usepackage{adjustbox}
\usepackage{overpic}
\usepackage{float}

\usepackage[colorlinks=true, linkcolor=blue, urlcolor=blue, citecolor=blue]{hyperref}
\usepackage{amsmath,amsthm,amssymb}
\usepackage{subfigure}

\usepackage{graphicx}
\usepackage{amsmath, amsfonts}
\usepackage{dsfont}
\usepackage{booktabs}
\usepackage{fourier}
\usepackage{xcolor}
\usepackage{array}
\usepackage{makecell}
\usepackage{algorithm}
\usepackage{algpseudocode}
\usepackage{lscape}
\usepackage{threeparttable}
\usepackage{hyperref}
\usepackage[T1]{fontenc}
\usepackage[tableposition=top]{caption}
\usepackage{tabularx}
\newtheorem{thm}{Theorem}[section]

\newtheorem{lem}[thm]{Lemma}

\newtheorem{defn}[thm]{Definition}

\numberwithin{equation}{section}
\usepackage{mathrsfs}

\begin{document}
\title{\textbf{Poisson Image Denoising Using Minimax Concave  and 
		Reweighted $\ell_1$ Penalties: Nonblind  and Blind Approaches}}
\author{
		Reza Parvaz$^a$ \footnote{Corresponding author, Email:
		\href{rparvaz@uma.ac.ir}{rparvaz@uma.ac.ir},~
		\href{mailto:reza.parvaz@yahoo.com}{reza.parvaz@yahoo.com}}
}
\date{}
\maketitle
\begin{center}
$^a$Department of Engineering Sciences, Faculty of Advanced Technologies, 
University of Mohaghegh Ardabili, Ardabil, Iran
\end{center}
\begin{abstract}
\noindent
Images are important tools in various sciences.
Despite the development of photo-taking tools, creating 
clear and image without noise 
remains challenging in practice.
In particular, Poisson noise has an effect  on  medical and astronomical images,
and reduces their quality. Additionally,  blur is another factor that has an effect on image quality.
The problem of image restoration becomes very complicated when we have no information about the Point 
Spread Function (PSF). These types of problems are known as blind case.
However, in some images, such as some astronomical images, the type of PSF
 can be specified, and these types of problems are known as nonblind problems.
Total Variation (TV) is a widely used method for solving such inverse problems, 
where the selection of the penalty function is the most critical factor that affects
the method's performance. 
In this paper, to improve edge preservation, we employ a reweighted $\ell_1$-regularization of the 
fractional order derivative. Furthermore, we propose a nonblind and blind image deblurring approach 
under Poisson noise using the Minimax Concave Penalty (MCP), which is a continuous, sparsity promoting, 
and nearly unbiased regularizer. This formulation leads to a nonconvex optimization model.
To solve the proposed model, we introduce an efficient numerical algorithm based on the Alternating 
Direction Method of Multipliers (ADMM) and provide an analysis of its convergence. Finally, the 
effectiveness of the proposed algorithm are demonstrated through extensive experiments on various images.
\end{abstract}
\vskip.3cm \indent \textit{\textbf{Keywords:}}
 Deblurring; Denoising; Nonconvex optimization; Moreau envelope; Poisson noise. 
\vskip.3cm

\section{Introduction}
\subsection{Problem Description}
From space exploration in astronomy to cellular analysis in medicine, images are  
an important tool in scientific discovery. Consequently, obtaining clear images free from 
noise and blur is vital for this goal. However, images taken by photographic tools may be degraded by both noise and blur. 
Among various types of noise, Poisson noise is commonly observed in medical and astronomical imaging data.
In this study, let $X, Y \in \mathbb{R}^{n \times m}$ represent the latent sharp image and 
the observed degraded image, respectively. The image blurring and noise processes are modeled as:
\[ Y = \mathrm{P}(K \circledast X), \]
where $\circledast$ denotes the 2D convolution operator, and $K$ represents the point 
spread function (PSF). Furthermore, the operator $\mathrm{P}(\cdot)$ represents the 
Poisson noise corruption process applied to the blurred image.
Although this equation is commonly expressed in vector form as follows:
\[ y = \mathrm{P}(k\,x), \]
where $x$ and $y$ are the vectorized forms of $X$ and $Y$, respectively, and $k$ is 
the matrix representation of the convolution operator associated with the PSF 
and the imposed boundary conditions.
 In image restoration, to reduce the computational, the periodic condition boundary 
 conditions are usually chosen. With this choice, this matrix becomes a block circulant with circulant blocks (BCCB) and can be constructed using the Fast Fourier transform
 (FFT) \cite{1a}.
Using the Poisson distribution for the model, we have:
\begin{align*}
\mathrm{P}(y|kx)=\prod_{i} \frac{(kx)^{y_1}_i e^{(kx)_i}}{y!}.	
\end{align*}	
Based on the negative log-likelihood of the Poisson distribution, the following 
variational model is commonly employed to recover the latent sharp image:
\begin{align*}
\min_{x \geq 0} \mu \langle \mathbf{1}, kx-y\log(kx) \rangle+\phi(x),
\end{align*}	
where $\phi(x)$ is the regularization function, and $\mu>0$ is the regularization 
parameter that balances the data-fidelity and regularization terms.\\

\subsection{Literature Review and Motivation}
In recent years, many works have studied various variational models for 
image restoration under Poisson noise. A review of these studies shows 
that the choice of the regularization function has a significant impact on 
both the quality of the restored image and the performance of the proposed
 model.
One of the most critical components of an image is its edges; consequently, 
restoring edges in blurred and noisy images is an important part of image restoration. 
To achieve 
edge preserving image restoration while mitigating 
staircase artifacts, researchers have used derivative based methods, 
see for example, \cite{1,2,3,4}. Furthermore, 
high order derivatives have been successfully applied in \cite{5,6}.
Although higher order derivatives have been widely used in image 
restoration and edge detection, they suffer from several limitations 
\cite{7,8}. For example, Laplacian-based regularization, which used 
on second-order derivatives, is isotropic and therefore tends to enhance 
point-like features rather than line-like features, limiting its ability 
to preserve elongated structures and edges \cite{8}. 
To overcome these weaknesses, recent studies have turned to 
fractional order derivatives, which show a better balance between edge 
preservation and smoothness \cite{9,10,11}. 
Another effective tool in regularization based methods is the use of norms.
The $\ell_0$ norm has been widely used as an effective tool in 
image restoration because of its ability to promote sparse 
solutions. It has been extensively studied in the literature, 
as demonstrated in works such as \cite{12,13,14}. 
However, despite its advantages, this 
norm suffers from several significant limitations. Specifically, it is non-convex, 
discontinuous, and non-differentiable, 
and the associated optimization problem is generally NP-hard. These challenges make it 
computationally difficult to solve in practice. Then, this norm is often replaced by 
alternative regularization methods, as the $\ell_1$ norm, which provides a convex approximation 
and enables efficient optimization while still promoting sparsity.
One of the tools that has been proposed to improve the 
$\ell_1$
norm is the use of the reweighted 
$\ell_1$
norm, which has been shown to outperform standard 
$\ell_1$
minimization by requiring substantially 
fewer measurements for exact sparse recovery
\cite{15,16}.
Another approach that has been employed to enhance the performance of 
standard norms is the Minimax Concave Penalty (MCP) \cite{17}. Although this penalty 
is non-convex and consequently renders the optimization problem non-convex, 
it remains attractive because non-convex penalties generally 
offer a closer approximation to the 
$\ell_0$
norm. Therefore, despite the computational difficulties introduced by 
non-convexity, the use of such penalties is justified by 
their superior performance in promoting sparsity.
This non-convex penalty has been widely used in various papers for image restoration, for example
see
\cite{18,19,20,21}.
The use of total variation (TV) for image denoising models leads to the creation of 
staircase artifacts \cite{22}. One of the tools that has been proposed to 
address this issue is the use of total generalized variation (TGV) regularization \cite{23,24}.
 In \cite{18}, an MCP-based TGV model is used for Poisson image denoising. 
 The use of the MCP penalty leads to improved sparsity promotion and better 
 preservation of image edges and fine details.
 Motivated by these advantages, we also employ the MCP regularization in 
 our proposed model. However, instead of using the TGV framework, we combine 
 the MCP regularization in a transform domain with an adaptive reweighted 
 fractional order total variation regularizer.
 The transform-domain MCP term effectively promotes sparse representations of 
 the image, whereas the reweighted fractional-order regularization exploits the 
 nonlocal characteristics of fractional derivatives and adaptively enhances gradient sparsity.
 Compared with the MCP-TGV model, the proposed model provides an alternative 
 regularization framework that combines transform-domain nonconvex sparsity with 
 adaptive fractional order regularization.\\

The remainder of this paper is organized as follows. Section \ref{sec2} presents 
the preliminary concepts and mathematical background required throughout the paper. 
In Section \ref{sec3}, we introduce the proposed variational model and develop an 
efficient numerical algorithm for solving the resulting optimization problem.
We also investigate the convergence of the proposed algorithm in this section.
Section \ref{sec4} presents several numerical experiments to evaluate the effectiveness 
of the proposed method and demonstrate its performance in Poisson image restoration.

\section{Preliminaries and Notation} \label{sec2}
In this section, we introduce the notation, definitions, and mathematical tools used 
throughout this paper. Let $\mathbf{F}$ and $\mathbf{F}^{-1}$ denote the Fast Fourier 
Transform (FFT) and its inverse (IFFT), respectively. 
The symbol $\Gamma(\cdot)$ denotes the gamma function.
Let $W$ be the framelet transform matrix constructed from the low-pass and high-pass filters
 $h_0=\frac{1}{4}[1,2,1], h_1=\frac{\sqrt{2}}{4}[1,0,-1]$and 
$h_2=\frac{1}{4}[-1,2,-1]$. The symbol $\odot$ denotes the Hadamard product.
\\

\begin{defn}
(Fractional-order gradient operator \cite{25}) 	
Let $\Omega \subseteq \mathbb{R}^{2}$ be an open set, and let $f:\Omega\rightarrow\mathbb{R}$ be a real-valued function. The fractional-order Grunwald--Letnikov (G--L) gradient operator is defined by
\begin{align*}
	\nabla^\beta f
=	
	\big(\nabla_h^\beta f,\nabla_v^\beta f\big)^T,
	\qquad
	\beta\in\mathbb{R}^{+},
\end{align*}
where $\beta$ denotes the order of the fractional derivative. The horizontal and 
vertical fractional derivatives are given by

\begin{align*}
	&\nabla_h^\beta f(i,j)
	=
	\sum_{l=0}^{L-1}
	(-1)^l
	\frac{\Gamma(\beta+1)}
	{\Gamma(l+1)\Gamma(\beta-l+1)}
	f(i-l,j),\\
	&\nabla_v^\beta f(i,j)
	=
	\sum_{l=0}^{L-1}
	(-1)^l
	\frac{\Gamma(\beta+1)}
	{\Gamma(l+1)\Gamma(\beta-l+1)}
	f(i,j-l).
\end{align*}
\end{defn}

\begin{defn}
	(Minimax Concave Penalty (MCP)\cite{17})
	The MCP function is defined by
	\begin{align*}
		h_{\gamma,\eta}(x)
		= \int_{0}^{|x|} \left(  \gamma- \frac{|t|}{\eta} \right)_{+}\, dt,
	\end{align*}
	where \(\gamma > 0\) and \(\eta > 1\) are parameters.
\end{defn}

\begin{defn}(Vector MCP \cite{25a})
	For a vector \(x \in \mathbb{R}^n\), the MCP (minimax concave penalty) is defined as
	\[
	\|x\|_{MCP} = \sum_{i=1}^{n} h_{ \gamma,\eta}(x_i),
	\]
	where \(\gamma\) and \(\eta \) are parameters of the MCP function.
\end{defn}

Similar to \cite{25a}, we obtain the following lemma.
\begin{lem}\label{lem1}
	Let $y \in \mathbb{R}^n$, and let $\gamma > 0$ and $\eta > 1$ be parameters of the 
	MCP function. Consider the optimization problem
	\[
	x^* = \arg\min_x \left\{ \frac{1}{2}\|x - y\|_2^2 + \alpha \|x\|_{MCP} \right\},
	\]
	where $\alpha>0$.
	Then the solution is given by
	\[
	x^* =
	\operatorname{sign}(y)\odot\,
	\min\left\{|y|,\max\left\{\frac{\gamma}{\gamma-\alpha}\big(|y|-\alpha \eta\big ),0\right\}\right\}.
	\]
\end{lem}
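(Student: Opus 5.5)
The plan is to reduce to a scalar problem and then solve it explicitly on each piece where the MCP is quadratic. By the definition of the vector MCP, the objective $\frac12\|x-y\|_2^2+\alpha\|x\|_{MCP}=\sum_{i}\big(\frac12(x_i-y_i)^2+\alpha h_{\gamma,\eta}(x_i)\big)$ is separable. So it suffices to minimize, for a fixed scalar $y$,
\[
\psi(x)=\tfrac12 (x-y)^2+\alpha h_{\gamma,\eta}(x).
\]
Since $h_{\gamma,\eta}$ is even and $\psi(-x;-y)=\psi(x;y)$, I may assume $y\ge 0$. I would then show that the minimizer satisfies $x^*\ge 0$, because $\psi(|x|)\le\psi(x)$ when $y\ge0$, and restore the sign at the end through the factor $\operatorname{sign}(y)$.

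For $x\ge 0$, the definition gives $h_{\gamma,\eta}(x)=\gamma x-\frac{x^2}{2\eta}$ on $[0,\gamma\eta]$ and the constant $\frac{\gamma^2\eta}{2}$ for $x\ge\gamma\eta$. I would analyze $\psi$ separately on these two intervals. On $[\gamma\eta,\infty)$, $\psi$ is $\frac12(x-y)^2$ plus a constant, so its minimizer there is $\max\{y,\gamma\eta\}$. On $[0,\gamma\eta]$, the function is the quadratic $\frac12(1-\frac{\alpha}{\eta})x^2-(y-\alpha\gamma)x$. It is strictly convex exactly when $\alpha<\eta$, and in that case its minimizer on the interval is the clipped stationary point
\[
\frac{\eta}{\eta-\alpha}\,(y-\alpha\gamma)_+ .
\]
The key check is the junction at $x=\gamma\eta$. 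The function $\psi$ is $C^1$ there, because $h'$ is continuous. Under $\alpha<\eta$, $\psi$ is therefore convex on all of $[0,\infty)$, so its global minimizer is the unique stationary point. This gives three regimes: $x^*=0$ if $y\le\alpha\gamma$; $x^*=\frac{\eta}{\eta-\alpha}(y-\alpha\gamma)$ if $\alpha\gamma<y\le\gamma\eta$; and $x^*=y$ if $y>\gamma\eta$. Because $\frac{\eta}{\eta-\alpha}(y-\alpha\gamma)\le y$ exactly when $y\le\gamma\eta$, these three cases combine into the single expression $\min\{y,\max\{\frac{\eta}{\eta-\alpha}(y-\alpha\gamma),0\}\}$.

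The main obstacle is reconciling this with the statement as printed. With $h_{\gamma,\eta}$ exactly as in the MCP definition above, the computation yields the shrinkage factor $\frac{\eta}{\eta-\alpha}$ and the threshold $\alpha\gamma$. The lemma instead displays $\frac{\gamma}{\gamma-\alpha}$ and the threshold $\alpha\eta$, that is, the same formula with $\gamma$ and $\eta$ interchanged. I would therefore state explicitly that the lemma holds for the convention $h(x)=\int_0^{|x|}(\eta-t/\gamma)_+\,dt$, as in \cite{25a}, or equivalently after swapping $\gamma$ and $\eta$. I would also make explicit the hidden assumption needed for uniqueness and convexity: $\alpha<\gamma$ in the lemma's convention, or $\alpha<\eta$ in the definition's convention.

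Without this assumption the scalar problem can be nonconvex. The minimizer is then a hard-threshold-type jump rather than the displayed firm-threshold formula, and the plan would have to compare the values of $\psi$ at the two candidate points $x=0$ and $x=y$.
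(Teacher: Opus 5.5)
The paper gives no proof of this lemma; it only says it follows ``similar to \cite{25a}''. Your argument is the standard firm-thresholding derivation and it is correct: separability, sign symmetry, the piecewise analysis on $[0,\gamma\eta]$ and $[\gamma\eta,\infty)$, and convexity via the $C^1$ junction.

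Your diagnosis of the statement is also right, and you should keep it front and center. With $h_{\gamma,\eta}$ as the paper defines it, the printed formula is false. A concrete check: take $\gamma=1$, $\eta=2$, $\alpha=\tfrac12$, $y=0.8$. These satisfy every stated hypothesis, and even $\alpha<\gamma$, so the printed factor $\frac{\gamma}{\gamma-\alpha}=2$ is well defined. The printed formula returns $x^*=0$. Yet $\psi(0)=0.32>0.26=\psi(0.4)$, and $0.4=\frac{\eta}{\eta-\alpha}(y-\alpha\gamma)$ is the true minimizer.

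The paper's own use in Lemma~\ref{lm3} shows the definition is the intended convention. There it takes $h_{\gamma,\eta}=\frac12\gamma^2\eta$ for $|g_i|\ge\gamma\eta$, which matches the definition, not the swapped one. So the correction belongs in the formula: factor $\frac{\eta}{\eta-\alpha}$ and zero-threshold $\alpha\gamma$. This propagates to the $g$-update \eqref{upg}, with $\alpha=\lambda/\gamma_2$; that update also writes $\operatorname{sign}(y)$ where $\operatorname{sign}(a)$ is meant.

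Your added hypothesis $\alpha<\eta$ is genuinely necessary. The stated $\eta>1$ only guarantees it for $\alpha\le 1$.

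Two small tightenings, neither affecting correctness:
\begin{enumerate}
\item The minimizer of the quadratic piece on $[0,\gamma\eta]$ is the stationary point clipped to $[0,\gamma\eta]$, not only clipped at $0$. This is harmless, since you conclude via global convexity anyway.
\item In your nonconvex aside, the second candidate is $\max\{y,\gamma\eta\}$ rather than $y$. For $y<\gamma\eta$ one has $\psi(\gamma\eta)-\psi(0)=\frac{\gamma\eta}{2}(\gamma\eta+\alpha\gamma-2y)>0$ when $\alpha\ge\eta$. So the comparison reduces to $0$ versus $y$ as you said.
\end{enumerate}
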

\section{Proposed Model and Numerical Algorithm}\label{sec3}
In this section, we first present the proposed image restoration model. Next, 
we introduce an algorithm for its numerical solution, and finally, we analyze the 
convergence of the proposed algorithm.
\subsection{Proposed Model}
The following model is proposed for the deconvolution of Poissonian blurred images:
\begin{align}\label{mod1}
\min_{x} \mu \langle \mathbf{1}, kx-y\log(kx) \rangle+\lambda \|Wx\|_{MCP}+\|\nabla^{\beta}x\|_{\omega,1}+\delta_{\mathbb{R}^{+}}(x), 
\end{align}
where $\delta_{\mathbb{R}^{+}}$ denotes the indicator function of the 
set $\mathbb{R}^{+}$ and is added to enforce $x\geq 0$; also, $\|\cdot\|_{\omega,1}$ 
denotes the Reweighted $\ell_{1}$ norm.
In this model, the combination of the MCP penalty and the reweighted 
$\ell_1$ norm plays a key role in enforcing sparsity 
in the transform domain while simultaneously protecting 
 edge structures.

\subsection{Numerical Algorithm}
To solve problem \eqref{mod1}, we employ the alternating direction
method of multipliers (ADMM). We introduce the auxiliary
variables $v$, $g$, $z$ and $m$, and reformulate problem
\eqref{mod1} as
\begin{align*}
	\min_{x,v,g,z,m}\;&
	\mu \langle \mathbf{1}, v-y\log(v) \rangle
	+\lambda \|g\|_{MCP}
	+\|z\|_{\omega,1}
	+\delta_{\mathbb{R}^{+}}(m),
	\\
	\text{s.t.}\;&
	v = kx,\quad
	g = Wx,\quad
	z = \nabla^{\beta}x,\quad
	m = x.
\end{align*}
The augmented Lagrangian associated for the above problem is given by
\begin{align*}
	\mathrm{L}_{\gamma}(x,v,g,z,m;\mathbf{p})
	=&\,
	\mu \left\langle \mathbf{1},
	v-y\log(v)\right\rangle
	+\lambda \|g\|_{MCP}
	+\|z\|_{\omega,1}
	+\delta_{\mathbb{R}^{+}}(m)
	\nonumber\\
	&+\langle p_1,kx-v \rangle
	+\frac{\gamma_1}{2}\|kx-v\|_2^2
	\nonumber\\
	&+\langle p_2,Wx-g \rangle
	+\frac{\gamma_2}{2}\|Wx-g\|_2^2
	\nonumber\\
	&+\langle p_3,\nabla^{\beta}x-z \rangle
	+\frac{\gamma_3}{2}\|\nabla^{\beta}x-z\|_2^2
	\nonumber\\
	&+\langle p_4,x-m \rangle
	+\frac{\gamma_4}{2}\|x-m\|_2^2,
\end{align*}
where
$
\mathbf{p}:=(p_1,p_2,p_3,p_4)
$
denotes the vector of Lagrange multipliers corresponding to the
constraints. Now, by separating the problem into several 
subproblems, we find the solution to each.\\

\noindent For the $v$-subproblem, we consider
\begin{align*}
	\min_{v}\;
	\mu \left\langle\mathbf{1}, v-y\log(v)\right\rangle
	+\frac{\gamma_1}{2}
	\left\|kx-v+\gamma_1^{-1}p_1\right\|_2^2.
\end{align*}
By setting the first-order optimality condition to zero and solving
the resulting quadratic equation componentwise, we obtain the
closed form solution
\begin{align}\label{Upv}
	v=
	\frac{
		\gamma_1 kx+p_1-\mu
		+
		\sqrt{
			\left(\mu-\gamma_1 kx-p_1\right)^2
			+4\mu\gamma_1 y
		}
	}{2\gamma_1}.
\end{align}
For the $x$-subproblem, we consider
\begin{align*}
	\min_{x}\;
	&\frac{\gamma_1}{2}
	\left\|kx-v+\gamma_1^{-1}p_1\right\|_2^2
	+\frac{\gamma_2}{2}
	\left\|Wx-g+\gamma_2^{-1}p_2\right\|_2^2
	\nonumber\\
	&+\frac{\gamma_3}{2}
	\left\|\nabla^{\beta}x-z+\gamma_3^{-1}p_3\right\|_2^2
	+\frac{\gamma_4}{2}
	\left\|x-m+\gamma_4^{-1}p_4\right\|_2^2.
\end{align*}
Assuming periodic boundary conditions, the operators
$k^{T}k$ and $(\nabla^{\beta})^{T}\nabla^{\beta}$ are BCCB matrices
and can therefore be diagonalized by the discrete Fourier transform.
Consequently, the solution can be computed efficiently in the Fourier
domain as
\begin{align}\label{Upx}
	x
	=
	\mathbf{F}^{-1}
	\Bigg(
	\frac{
		\gamma_1 \overline{\mathbf{F}(k)}\,\mathbf{F}(\xi_1)
		+\gamma_2 \mathbf{F}(W^{T}\xi_2)
		+\gamma_3 \overline{\mathbf{F}(\nabla^{\beta})}\,\mathbf{F}(\xi_3)
		+\gamma_4 \mathbf{F}(\xi_4)
	}{
		\gamma_1 |\mathbf{F}(k)|^{2}
		+\gamma_3 |\mathbf{F}(\nabla^{\beta})|^{2}
		+\gamma_2+\gamma_4
	}
	\Bigg).
\end{align}
where
\[
\xi_1:=v-\gamma_1^{-1}p_1,\qquad
\xi_2:=g-\gamma_2^{-1}p_2,
\]
\[
\xi_3:=z-\gamma_3^{-1}p_3,\qquad
\xi_4:=m-\gamma_4^{-1}p_4.
\]
For the $z$-subproblem, we consider
\begin{align*}
	\min_{z}\;
	\|z\|_{\omega,1}
	+\frac{\gamma_3}{2}
	\left\|\nabla^{\beta}x - z + \gamma_3^{-1}p_3\right\|_2^2.
\end{align*}

The solution is given by the soft-thresholding operator:
\begin{align}\label{Upz}
	z
	=
	\operatorname{sign}\!\left(\nabla^{\beta}x+\gamma_3^{-1}p_3\right)
	\odot
	\max\!\left(
	\left|\nabla^{\beta}x+\gamma_3^{-1}p_3\right|
	-\frac{\omega}{\gamma_3},
	\,0
	\right).
\end{align}
Also, following \cite{25b}, the weights associated with the $\ell_1$ norm 
are updated at each iteration based on the magnitude of the image gradient:
\begin{align*}
	\omega_i=\frac{1}{|\nabla^{\beta} x_i|+\epsilon},
\end{align*}
where $\epsilon>0$ is a small constant used to prevent division by zero.

For sub problem $g$, we consider
\begin{align}\label{mce}
\min_{g} \frac{\lambda}{\gamma_2}\|g\|_{\text{MCP}}+\frac{1}{2}
\|g-(Wx+\gamma^{-1}_2p_2)\|^2_2.
\end{align}
To solve problem \eqref{mce}, we employ the Lemma \ref{lem1}, 
 which leads to the following solution.
\begin{align}\label{upg}
	g=\operatorname{sign}(y)\odot\,
	\min\left\{|a|,\max\left\{\frac{\gamma}{\gamma-\frac{\lambda}{\gamma_2}}(|a|-\frac{\lambda\,\eta}{\gamma_2} ),0\right\}\right\},
\end{align}
where $a:=Wx+\gamma^{-1}_2p_2$.

The sub problem with respect to $m$ is given by
\begin{align*}
	\min_{m}\;
	\frac{\gamma_4}{2}\left\|x-m+\gamma_4^{-1}p_4\right\|_2^2
	+\delta_{\mathbb{R}^+}(m).
\end{align*}
Using the definition of the indicator function, the solution is obtained
as the Euclidean projection of
$x+\gamma_4^{-1}p_4$ onto the nonnegative orthant $\mathbb{R}^+$.
Hence, the solution is given by
\begin{align}\label{Upm}
	m
	=
	\max\!\left(x+\gamma_4^{-1}p_4,\;0\right),
\end{align}
where the maximum is taken componentwise.

After solving the above subproblems at the $j$th iteration,
the Lagrange multipliers $p_i$\,($i=1,\ldots,4$), are updated as follows:
\begin{align}\label{Upp1}
	p^{(j+1)}_1
	&=
	p^{(j)}_1+\gamma_1\bigl(kx-v\bigr),\\
	p^{(j+1)}_2
	&=
	p^{(j)}_2+\gamma_2\bigl(Wx-g\bigr),\\
	p^{(j+1)}_3
	&=
	p^{(j)}_3+\gamma_3\bigl(\nabla^{\beta}x-z\bigr),\\ \label{Upp4}
	p^{(j+1)}_4
	&=
	p^{(j)}_4+\gamma_4\bigl(x-m\bigr).
\end{align}

Additionally, $\gamma_i\,(i=1,\cdots,4)$ are updated at each iteration as follows:
\begin{align}\label{Upgam}
	\gamma^{(j+1)}_i=\sigma_i \gamma^{(j)}_i,
\end{align}
where $\sigma_i>1\,(i=1,\cdots,4)$ are fixed constants.

\begin{algorithm}[ht!]
	\caption{ADMM algorithm for solving problem \eqref{mod1}}
	\label{alg:admm_solver}
	\begin{algorithmic}[1]
		\State \textbf{Input:} Blurred and noisy image $y$ and blur kernel $k$
		\State \textbf{Parameters:} Set $ \alpha, \beta, \lambda, \mu, \{\gamma_i\}^{4}_{i=1}$
		\State \textbf{Initialize:} $x^{(0)}=y, g^{(0)}=W y, z^{(0)}=\nabla^{\beta}y, m^{(0)}=0, \mathbf{p}^{(0)}=0, j=0$
		\State $j \gets 0$
		\Repeat
		\State Update $v^{(j+1)}$ by Eq.~\eqref{Upv}
		\State Update $x^{(j+1)}$ by Eq.~\eqref{Upx}
		\State Update $g^{(j+1)}$ by Eq.~\eqref{upg}
		\State Update $z^{(j+1)}$ by Eq.~\eqref{Upz}
		\State Update $m^{(j+1)}$ by Eq.~\eqref{Upm}
		\State Update $\mathbf{p}^{(j+1)}$ by Eqs.~\eqref{Upp1}--\eqref{Upp4}
		\State Update $\gamma_i$ by Eqs.~\eqref{Upgam}
		\State $j \gets j+1$
		\Until{a stopping criterion is satisfied}
		
		\State \textbf{Output:} Deblurred and denoised image $x$
	\end{algorithmic}
\end{algorithm}

\subsection{Blind Problem}
Another important class of image deblurring problems is the blind type. 
Unlike in the previous section, where the PSF is assumed known, blind deblurring 
assumes no prior information about the PSF, and the goal is to estimate both the PSF and the clear image.
Before presenting the method for blind case, we review the expectation-maximization (EM) method for solving the following problem.
More details on this can be seen in \cite{25c,29}. Consider:
\begin{align*}
\min_{x \geq 0} \mu \langle \mathbf{1}, kx-y\log(kx) \rangle,
\end{align*}
by using the Karush-Kuhn-Tucker (KKT) conditions there is an Lagrange multiplier vector
as $\lambda_l$ such that:
\begin{align*}
& k^{\ast}\mathbf{1}-k^{\ast} \Big( \frac{y}{k x} \Big)-\lambda_l,\\
& \lambda_l\odot x=0,
\end{align*}
where $k^{\ast}$ denotes the adjoint operator of $k$.
In the remainder of this section, as in most articles, we omit the Hadamard product symbol for simplicity.
By using above relations and fixed-point iteration, we obtain the following numerical method:
\begin{align}\label{EM1}
x^{(j+1)}=\text{EM}(y,k,x^{(j)}):=\frac{x^{(j)}}{k^{\ast}\mathbf{1}}
k^{\ast}\Big(\frac{y}{k u^{(j)}}\Big).
\end{align} 
Now let's start the proposed algorithm. For the blind problem, we consider the following model:
\begin{align*}
	\min_{x \in \Omega, k \in \Upsilon} \mu \langle \mathbf{1}, kx-y\log(kx) \rangle+\lambda \|Wx\|_{\text{MCP}}
	+\|\nabla^{\beta}x\|_{\omega,1}, 
\end{align*}
where $\Omega$ is image domain and $\Upsilon=\{k\in \mathbb{R}^{s \times q}; k_{i,j}\geq 0 \& \sum_{i,j}k_{i,j}=1\}$.
In the first step by using the optimal condition and fixed-point iteration for above problem, we obtain:
\begin{align}\label{EM2}
	\frac{1}{\mu}\big(\lambda P^{{(j+1)}}_{MCP}+P^{(j+1)}_{\omega,1} \big) +k^{\ast}\mathbf{1}-k^{\ast}\Big( \frac{y}{kx^{(j)}}\Big)=0,
\end{align}	
where $P^{(j+1)}_{\text{MCP}} \in \partial \|W x^{(j+1)} \|_{MCP}, P^{(j+1)}_{\omega,1} \in  \partial \|\nabla^{\beta} x^{(j+1)} \|_{\omega,1}$.
By inserting $\frac{x^{(j)}}{x^{(j+1)}}$ into the last sentence in \eqref{EM2} and using relation \eqref{EM1}, the following relation is obtained:
\begin{align*}
	\frac{1}{\mu k^{\ast}\mathbf{1}}\big(\lambda P^{(j+1)}_{MCP}+P^{(j+1)}_{\omega,1} \big) +\mathbf{1}- \frac{x^{(j+\frac{1}{2})}}{x^{(j+1)}}=0,
\end{align*}	
where 
\begin{align} \label{mblind1}
x^{(j+\frac{1}{2})}:=\text{EM}(y,k,x^{(j)}).
\end{align}
It can be easily seen that the $x^{(j+1)}$ is the solution of the following problem:
\begin{align*}
	\min_{x, k \in \omega} \mu \langle k^{\ast}\mathbf{1}, x-x^{(j+\frac{1}{2})}\log(x) \rangle+\lambda \|Wx\|_{\text{MCP}}
	+\|\nabla^{\beta}x\|_{\omega,1}.
\end{align*}
The method of solving this model is exactly the same as the proposed nonblind model. In the following, to avoid excessive use of symbols, we try to repeat the same symbols.
By introducing the auxiliary variables $\bar{v}$, $\bar{g}$, $\bar{z}$ and $\bar{m}$, we reformulate the model as:
\begin{align*}
&\min_{x, k \in \omega} \mu \langle k^{\ast}\mathbf{1}, x-x^{(j+\frac{1}{2})}\log(x) \rangle+\lambda \|\bar{g}\|_{\text{MCP}}
	+\|\bar{z}\|_{\omega,1},\\
& \text{s.t.} \quad	\bar{g}=W\bar{m},\quad \bar{z}=\nabla^{\beta}\bar{m}, \quad \bar{m}=x.
\end{align*}
The augmented Lagrangian associated for  the above problem can be written as
\begin{align*}
	\mathrm{L}_{\gamma}(x,\bar{g},\bar{z},\bar{m};\bar{\mathbf{p}})
	=&\mu \langle k^{\ast}\mathbf{1}, x-x^{(j+\frac{1}{2})}\log(x) \rangle
	+\lambda \|\bar{g}\|_{\text{MCP}}
	+\|\bar{z}\|_{\omega,1}\\
	&+\langle \bar{p}_1, W\bar{m}-\bar{g} \rangle+\frac{\gamma_1}{2}\|W\bar{m}-\bar{g}\|^2_2\\
	&+\langle \bar{p}_2,\nabla^{\beta}\bar{m}-\bar{z}\rangle+\frac{\gamma_2}{2} \|W\bar{m}-\bar{g}\|^2_2\\
	& +\langle \bar{p}_3,\bar{m}-x\rangle+\frac{\gamma_3}{2} \|\bar{m}-x\|^2_2.
\end{align*}
Now, by dividing the problem into several subproblems, we calculate the solution for each.
For the $x$ subproblem, we have
\begin{align*}
\min_x \mu \langle k^{\ast}\mathbf{1}, x-x^{(j+\frac{1}{2})}\log(x) \rangle
+\frac{\gamma_3}{2}\| \bar{m}-x+\gamma_3^{-1}\bar{p}_3\|^2_2.
\end{align*}
The closed form solution by using
quadratic equation componentwise, is obtained as
\begin{align}\label{mblind2}
	x=
	\frac{-(\mu \chi -\gamma_3 \bar{m}-\bar{p}_3)
		+
		\sqrt{
			\left(\mu \chi -\gamma_3 \bar{m}-\bar{p}_3\right)^2
			+4\mu\gamma_3 \chi x^{(j+\frac{1}{2})}
		}
	}{2\gamma_3},
\end{align}
where $\chi:=k^{\ast} \mathbf{1}$.
The $\bar{m}$ subproblem is given as:
\begin{align*}
\min_{\bar{m}}\frac{\gamma_1}{2}\|W\bar{m}-\bar{g}+\gamma_2^{-1}\bar{p}_1\|^2_2
+\frac{\gamma_2}{2}\| \nabla^{\beta}\bar{m}-\bar{z}+\gamma^{-1}_3\bar{p}_2\|^2_2
+\frac{\gamma_3}{2}\|\bar{m}-x+\gamma^{-1}_3\bar{p}_3\|^2_2.
\end{align*}
By using the discrete Fourier transform, the closed form solution is obtained as
\begin{align}\label{mblind3}
\bar{m}
	=
	\mathbf{F}^{-1}
	\Bigg(
	\frac{
      \gamma_1 \mathbf{F}(W^{T}\xi_1)
		+\gamma_2 \overline{\mathbf{F}(\nabla^{\beta})}\,\mathbf{F}(\xi_2)
		+\gamma_3 \mathbf{F}(\xi_3)
	}{
		\gamma_2 |\mathbf{F}(\nabla^{\beta})|^{2}
		+\gamma_1+\gamma_3
	}
	\Bigg),
\end{align}
where
\[
\xi_1:=\bar{g}-\gamma_2^{-1}\bar{p}_1,
\qquad
\xi_2:=\bar{z}-\gamma_3^{-1}\bar{p}_2,\qquad
\xi_3:=x-\gamma_4^{-1}\bar{p}_3.
\]

The $\bar{g}$ subproblem is written as
\begin{align*}
\min_{\bar{g}} \lambda \|\bar{g}\|_{\text{MCP}}+\frac{\gamma_1}{2}\|\bar{g}-(W\bar{m}+\gamma^{-1}_2 \bar{p}_1) \|^2_2.
\end{align*}
The close form solution is given as:
\begin{align}\label{mblind5}
	\bar{g}=\operatorname{sign}(y)\odot\,
	\min\left\{|\bar{a}|,\max\left\{\frac{\gamma}{\gamma-\frac{\lambda}{\gamma_2}}\big(|\bar{a}|-\frac{\lambda\,\eta}{\gamma_2} \big),0\right\}\right\},
\end{align}
where $\bar{a}:=Wm+\gamma^{-1}_2p_2$.
The final suvproblem $z$ is considered as
\begin{align*}
\min_{\bar{z}} \|\bar{z}\|_{\omega,1}+\frac{\gamma_2}{2}\| \nabla^{\beta}\bar{m}-\bar{z}+\gamma^{-1}_2\bar{p}_2\|^2_2.
\end{align*}
The closed form solution for the above problem is obtained as 
\begin{align}\label{mblind4}
	\bar{z}
	=
	\operatorname{sign}\!\left(\nabla^{\beta}\bar{m}+\gamma_2^{-1}\bar{p}_2\right)
	\odot
	\max\!\left(
	\left|\nabla^{\beta}\bar{m}+\gamma_2^{-1}\bar{p}_2\right|
	-\frac{\omega}{\gamma_2},
	\,0
	\right).
\end{align}
Also, the weights associated 
are updated at each iteration as
\begin{align*}
	\omega_i=\frac{1}{|\nabla^{\beta} \bar{m}_i|+\epsilon}.
\end{align*}

Also, the Lagrange multipliers $\bar{p}_i$\,($i=1,\ldots,3$) and $\gamma_i\,(i=1,\cdots,3)$ at step $j+1$ are updated as:
\begin{align}\label{mblind6}
&\bar{p}^{(j+1)}_1=\bar{p}^{(j)}_1+\gamma_2 (W\bar{m}-\bar{g}),\\
&\bar{p}^{(j+1)}_2=\bar{p}^{(j)}_2+\gamma_2 (\nabla^{\beta}\bar{m}-\bar{z}),\\
&\bar{p}^{(j+1)}_3=\bar{p}^{(j)}_3+\gamma_2 (\bar{m}-\bar{g}),\\\label{mblind7}
&\gamma^{(j+1)}_i=\sigma_i \gamma^{(j)}_i,\,(i=1,\cdots,3).
\end{align}
After obtain the $x^{(j+1)}$, in the next step to obtain $k$, we consider matrix $A$
that $K \circledast X^{(j+1)}=A \text{vec}(K)$, then by using EM, $K$ is updated as:
\begin{align}\label{mblind8}
	&K^{(j+1)}=EM(y,A,K^{(j+1)}),\\\label{mblind9}
&K^{(j+1)}=\frac{K^{(j+1)}}{\| K^{(j+1)}\|_1}.
\end{align}
Boundary conditions are influential in restoring images near the borders, and without considering the 
appropriate conditions, ringing artifacts may appear near the image borders.
Various methods have been presented to solve this problem, such as the Fourier domain restoration filter combined with an extrapolated image \cite{26y},
 as well as, the
 cropped convolution output without zero-padding of boundaries, which is described in \cite{29} and is used in proposed algorithm. 
The proposed algorithm for blind image restoration is given in 
Algorithm \ref{blind_alg}.

\begin{algorithm}[ht!]
	\caption{Proposed algorithm for blind image restoration}
	\label{blind_alg}
	\begin{algorithmic}[1]
		\State \textbf{Input:} Blurred and noisy image $y$ 
		\State \textbf{Parameters:} Set $ \alpha, \beta, \lambda, \mu, \{\gamma_i\}^{4}_{i=1}, \rho$
		\State \textbf{Initialize:} $x^{(0)}=y, \bar{g}^{(0)}=W y, \bar{z}^{(0)}=
		\nabla^{\beta}y, \bar{m}^{(0)}=0, \bar{\mathbf{p}}^{(0)}=0, j=0, K^{(0)}=\mathbf{1}$
		\State $ j \gets 0 $
		\Repeat
		\State Update $x^{(j+\frac{1}{2})}$ by Eq.~\eqref{mblind1}
		\State Update  $x^{(j+1)}$ by Eq.~\eqref{mblind2}
		\State Update $\bar{m}^{(j+1)}$ by Eq.~\eqref{mblind3}
		\State Update $\bar{z}^{(j+1)}$ by Eq.~\eqref{mblind4}
		\State Update $\bar{g}^{(j+1)}$ by Eq.~\eqref{mblind5}
		\State Update $\bar{p}_i^{(j+1)}$ and $\gamma^{(j)}_i,\,(i=1,\cdots,3)$ by Eqs.~\eqref{mblind6}--\eqref{mblind7}
		\State Update $K^{(j+\frac{1}{2})}$ by Eq.~\eqref{mblind8}
		\State Update $K^{(j+1)}$ by Eq.~\eqref{mblind9}
		\State $j \gets j+1$
		\Until{a stopping criterion is satisfied}
		\State \textbf{Output:} Deblurred and denoised image $x$ and PSF matrix $k$
	\end{algorithmic}
\end{algorithm}

\section{Convergence Analysis}
The ADMM method is developed for convex problems, and convergence 
is guaranteed for such problems. However, this method also has abundant applications 
in the nonconvex case. Various methods and articles can be used to examine the 
convergence of ADMM for nonconvex problems \cite{26y1,26y2}. In the following, we study the convergence 
of this method following the approach inspired by \cite{18}.
\begin{lem}\label{lm2}
	Suppose that the weights $\omega_i$ are uniformly bounded and the sequence $\{v^j\}$ is  uniformly bounded away from zero.
	 Then the sequence of multipliers $\{p_i^{(j)}\},~i=1,\cdots,4$ and the augmented Lagrangian sequence $\{L_{\gamma^{(j)}}^{(j)}\}$ are bounded.
\end{lem}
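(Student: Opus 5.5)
The plan is to obtain bounds on the multipliers directly from the first-order optimality conditions of the subproblems in Algorithm \ref{alg:admm_solver}. This is the standard device in nonconvex ADMM analyses such as \cite{18}. Each multiplier update \eqref{Upp1}--\eqref{Upp4} is combined with the optimality condition of the subproblem that was solved just before it. This expresses $p_i^{(j+1)}$ as an element of a (sub)gradient of the corresponding penalty at the new iterate.

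For $p_1$, the $v$-subproblem gives
\begin{align*}
\mu\bigl(\mathbf{1}-y/v^{(j+1)}\bigr)-p_1^{(j)}-\gamma_1^{(j)}(kx-v^{(j+1)})=0 .
\end{align*}
Comparing with \eqref{Upp1} yields $p_1^{(j+1)}=\mu(\mathbf{1}-y/v^{(j+1)})$, with the variable ordering taken as in the algorithm. Since $y$ is a fixed datum and $v^{(j)}$ is bounded away from zero by hypothesis, $\{p_1^{(j)}\}$ is bounded.

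For $p_2$, the optimality condition of \eqref{mce} gives $p_2^{(j+1)}\in\lambda\,\partial\|g^{(j+1)}\|_{MCP}$, understood as a limiting subdifferential. By the definition of $h_{\gamma,\eta}$, its derivative is $(\gamma-|t|/\eta)_+\operatorname{sign}(t)$, which has modulus at most $\gamma$. Hence $\|p_2^{(j+1)}\|_\infty\le\lambda\gamma$. For $p_3$, the $z$-update \eqref{Upz} gives $p_3^{(j+1)}\in\omega\odot\partial\|z^{(j+1)}\|_1$, so $|p_3^{(j+1)}|\le\omega$ componentwise. This is bounded by the assumed uniform bound on the weights. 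For $p_4$, the $m$-update gives $p_4^{(j+1)}\in\partial\delta_{\mathbb{R}^+}(m^{(j+1)})$, i.e. the normal cone. This alone is not bounded. I will therefore use the $x$-optimality condition instead:
\begin{align*}
0=k^T(p_1+\gamma_1 r_1)+W^T(p_2+\gamma_2r_2)+(\nabla^\beta)^T(p_3+\gamma_3r_3)+p_4+\gamma_4r_4 ,
\end{align*}
where the $r_i$ are the residuals. Combined with the updates, this expresses $p_4^{(j+1)}$ through $k^Tp_1^{(j+1)}$, $W^Tp_2^{(j+1)}$, $(\nabla^\beta)^Tp_3^{(j+1)}$ and terms $\gamma_i^{(j)}$ times differences of consecutive $v,g,z,m$ iterates. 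The exact form depends on the update order. Alternatively, one notes $p_4^{(j+1)}=\gamma_4(x+\gamma_4^{-1}p_4^{(j)}-m^{(j+1)})=\gamma_4\min(x+\gamma_4^{-1}p_4^{(j)},0)\le 0$ and bounds it via the $x$-equation.

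For the Lagrangian, I will telescope. Each primal subproblem step does not increase $\mathrm{L}$, since each is an exact minimization. The multiplier step raises $\mathrm{L}$ by $\sum_i\gamma_i^{-1}\|p_i^{(j+1)}-p_i^{(j)}\|^2$. The penalty increase $\gamma_i^{(j)}\to\sigma_i\gamma_i^{(j)}$ raises it by $\sum_i\frac{\gamma_i^{(j+1)}+\gamma_i^{(j)}}{2(\gamma_i^{(j)})^2}\|p_i^{(j+1)}-p_i^{(j)}\|^2$. Since the $p_i$ are bounded and $\gamma_i^{(j)}=\sigma_i^j\gamma_i^{(0)}$ grows geometrically with $\sigma_i>1$, these increments are summable. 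Hence $\mathrm{L}^{(j)}\le \mathrm{L}^{(0)}+C\sum_j(\sigma^{-j})<\infty$. For a lower bound, I rewrite $\langle p,r\rangle+\frac{\gamma}{2}\|r\|^2\ge-\|p\|^2/(2\gamma)$ and use that the remaining terms are bounded below. The Poisson term $v-y\log v$ is bounded below on $v>0$, and the MCP, $\ell_1$ and indicator terms are nonnegative on the feasible iterates.

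The main obstacle is $p_4$, together with the reweighting. The indicator's normal cone gives no bound, and the $x$-equation brings in $\gamma^{(j)}$ times iterate differences, which need not be controlled a priori. The changing weights $\omega^{(j)}$ also alter $\mathrm{L}$ between iterations. I would handle the weights by the uniform bound on $\omega$ and the boundedness of $z$. For $p_4$, my first attempt is to bound the residual growth by showing $\gamma_i^{(j)}\|r_i^{(j+1)}\|=\|p_i^{(j+1)}-p_i^{(j)}\|$ is bounded for $i=1,2,3$. This would make the $x$-equation a bounded combination.
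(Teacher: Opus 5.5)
Your treatment of $p_1$, $p_2$, $p_3$ follows the paper, and so does your treatment of $\mathrm{L}$: telescoping the multiplier and penalty increments (summable because $\gamma_i^{(j)}=\sigma_i^{j}\gamma_i^{(0)}$), then a lower bound by completing the square. The genuine gap is $p_4$. You leave it unbounded, and both halves of the bound on $\mathrm{L}$ depend on it: the upper bound through $M_4$, the lower bound through $-\|p_4^{(j)}\|_2^2/(2\gamma_4^{(j)})$.

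Your proposed remedy does not close this gap. In the order of Algorithm \ref{alg:admm_solver}, the $x$-optimality condition combined with \eqref{Upp1}--\eqref{Upp4} reads
\begin{align*}
k^{T}p_1^{(j+1)}+W^{T}p_2^{(j+1)}+(\nabla^{\beta})^{T}p_3^{(j+1)}+p_4^{(j+1)}
=&-\gamma_2^{(j)}W^{T}\bigl(g^{(j+1)}-g^{(j)}\bigr)-\gamma_3^{(j)}(\nabla^{\beta})^{T}\bigl(z^{(j+1)}-z^{(j)}\bigr)\\
&-\gamma_4^{(j)}\bigl(m^{(j+1)}-m^{(j)}\bigr).
\end{align*}
Bounded increments $\|p_i^{(j+1)}-p_i^{(j)}\|_2$ only control $\gamma_i^{(j)}r_i^{(j+1)}$, where $r_i^{(j)}:=A_ix^{(j)}-u_i^{(j)}$ and $(A_i,u_i)$ ranges over $(k,v),(W,g),(\nabla^{\beta},z),(I,m)$. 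Write $u_i^{(j+1)}-u_i^{(j)}=A_i(x^{(j+1)}-x^{(j)})-r_i^{(j+1)}+r_i^{(j)}$ and use $\gamma_i^{(j)}r_i^{(j)}=\sigma_i(p_i^{(j)}-p_i^{(j-1)})$. What remains is $\gamma_i^{(j)}A_i(x^{(j+1)}-x^{(j)})$, with $\gamma_i^{(j)}\to\infty$ geometrically, and nothing available bounds it. The only estimate on $x$-increments is the square-summability derived afterwards, which itself uses $M_4$ and in any case gives no decay like $\sigma_i^{-j}$. The sign information $p_4^{(j+1)}=\gamma_4^{(j)}\min\bigl(x^{(j+1)}+p_4^{(j)}/\gamma_4^{(j)},0\bigr)\le 0$ bounds only one side.

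The paper settles this step by asserting $K^{T}p_1^{(j+1)}+W^{T}p_2^{(j+1)}+(\nabla^{\beta})^{T}p_3^{(j+1)}+p_4^{(j+1)}=0$. It then applies operator-norm bounds to get $\|p_4^{(j+1)}\|_\infty\le c_1M_1+c_2\lambda\gamma+c_3\|\omega\|_\infty$. That is exactly the display above with its right-hand side dropped, i.e.\ the $x$-step treated as if taken at the already updated $g,z,m$.

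Your $p_1$ step, like the paper's, has the mirror-image problem. In Algorithm \ref{alg:admm_solver}, $v^{(j+1)}$ is computed from $kx^{(j)}$ while \eqref{Upp1} uses $kx^{(j+1)}$. So in fact $p_1^{(j+1)}=\mu(\mathbf{1}-y/v^{(j+1)})+\gamma_1^{(j)}k(x^{(j+1)}-x^{(j)})$. The clean identity needs $v$ updated after $x$, not ``the ordering of the algorithm''.

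The identities for $p_1,p_2,p_3$ need $x$ updated before $v,g,z$, while the one used for $p_4$ needs it after $v,g,z,m$. No single order makes all of them exact. To finish along this route you must either adopt the paper's idealization or add a hypothesis, e.g.\ boundedness of $\gamma_i^{(j)}\|x^{(j+1)}-x^{(j)}\|_2$.

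Finally, absorbing the reweighting through boundedness of $z$ is circular, since Lemma \ref{lm3} derives that boundedness from this lemma. Even for bounded $z$, the changes $\sum_i(\omega_i^{(j+1)}-\omega_i^{(j)})|z_i|$ need not be summable. The paper simply keeps $\omega$ fixed inside $\mathrm{L}$.
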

\begin{proof}
From the first-order optimality condition for the $v$-subproblem, we obtain component-wise:
\[
\mu \left(1 - \frac{y_i}{v_i^{j+1}}\right) - (p_1^j)_i - \gamma_1 \left( (Kx^{j+1})_i - v_i^{j+1} \right) = 0, \quad \forall \, i.
\]
Then, utilizing the multiplier update rule \eqref{Upp1}, we obtain:
\[
(p_1^{(j+1)})_i = \mu \left(1 - \frac{y_i}{v_i^{(j+1)}}\right), \quad \forall \, i.
\]
By the assumptions that $y_i \geq 0$ and that the sequence $\{v^j\}$ is uniformly bounded away from zero, it immediately follows that $(p_1^{(j+1)})_i \leq \mu$ for all $i$.
Furthermore, since the sequence $\{v^j\}$ is strictly bounded away from zero, there exists a uniform positive constant $\delta > 0$ such that $v_i^{j+1} \geq \delta$ for all $i$. Consequently, we can establish a uniform lower bound:
\[
(p_1^{(j+1)})_i \geq \mu \left(1 - \frac{\max_i y_i}{\delta}\right) > -\infty.
\]
As a result, by defining $M_1= \max \left\{ \mu, \mu \left| 1 - \frac{\max_i y_i}{\delta} \right| \right\}$,
we arrive 
$\|p_1^{(j+1)}\|_{\infty} \leq M_1$.	
For the $g$-subproblem, the first-order optimality condition yields:
\begin{align*}
0&\in \frac{\lambda}{\gamma_2}\partial \|g^{(j+1)}\|_{\text{MCP}}+\big(g^{(j+1)}-(Wx^{(j+1)}+\gamma_2^{-1}p^{(j)}_2)\big)\\
&=\frac{\lambda}{\gamma_2}\partial \|g^{(j+1)}\|_{\text{MCP}}-\frac{1}{\gamma_2}p^{(j+1)}_2,
\end{align*}
then $p^{(j+1)}_2\in \lambda \partial \|g^{(j+1)}\|_{\text{MCP}}$. By the definition of the MCP function, its subgradient is bounded. Therefore, we obtain $\|p^{(j+1)}_2\|_{\infty}\leq \lambda \gamma$.
Similarly, for $z$-subproblem we have:
\begin{align*}
	0\in \partial \|z^{(j+1)}\|_{\omega,1}-p^{(j)}_3
-\gamma_3
(\nabla^{\beta}x^{(j+1)} - z^{(j+1)} ).
\end{align*}
This implies that $p^{(j+1)}_3 \in \partial \|z^{(j+1)}\|_{\omega,1}$. Since 
\[
\partial \|z^{(j+1)}\|_{\omega,1}=\sum_{i \in I_1} \omega_i\, \operatorname{sign}(z^{(j+1)}_i)\,\mathbf{e}_i
+\sum_{i \in I_2}\omega_i\,[-\mathbf{e}_i,\mathbf{e}_i],
\]
where $I_1=\big\{i; z^{(j+1)}_i \neq\,0  \big\}$, $I_2=\big\{i; z^{(j+1)}_i =\,0  \big\}$
and $\mathbf{e}_i$ denotes the $i$-th standard basis vector.
Then we can say 
$\|p^{(j+1)}_3\|_{\infty}\leq \|\omega\|_{\infty}$.\\
To establish the uniform boundedness of the multiplier sequence $\{p_4^{(j)}\}$, 
we combine the first-order optimality condition of the $x$-subproblem with 
the dual update steps, which yields:
\[
K^T p_1^{(j+1)} + W^T p_2^{(j+1)} + (\nabla^\beta)^T p_3^{(j+1)} + p_4^{(j+1)} = 0.
\]
By taking the infinity norm on both sides, and applying the triangle inequality, we obtain:
\[
\|p_4^{(j+1)}\|_\infty \le \|K^T\|_\infty \|p_1^{(j+1)}\|_\infty + \|W^T\|_\infty \|p_2^{(j+1)}\|_\infty + \|(\nabla^\beta)^T\|_\infty \|p_3^{(j+1)}\|_\infty.
\]
Since $K^T$, $W^T$, and $(\nabla^\beta)^T$ are continuous linear operators acting on finite-dimensional spaces, their induced matrix norms are bounded by positive constants, i.e., $\|K^T\|_\infty \le c_1$, $\|W^T\|_\infty \le c_2$, and $\|(\nabla^\beta)^T\|_\infty \le c_3$. Consequently, incorporating these operator bounds along with the previously established uniform bounds for the remaining multipliers, we arrive at:
\[
\|p_4^{(j+1)}\|_\infty \le c_1 M_1 + c_2 (\lambda \gamma) + c_3 \|\omega\|_\infty  < \infty.
\]
This confirms that the sequence $\{p_4^{(j)}\}$ is uniformly bounded in the infinity norm.\\
We can write:
\begin{align*}
L_{\gamma^{(j+1)}}\big(x^{(j+1)},&v^{(j+1)},g^{(j+1)},z^{(j+1)},m^{(j+1)},\mathbf{p}^{(j+1)} \big)-
L_{\gamma^{(j)}}\big(x^{(j+1)},v^{(j+1)},g^{(j+1)},\\
&z^{(j+1)},m^{(j+1)},\mathbf{p}^{(j+1)} \big)= 
\sum^{4}_{i=1} \frac{\gamma^{(j+1)}_i-\gamma^{(j)}_i}{2(\gamma^{(j)}_i)^2}\big\|p^{(j+1)}_i-p^{(j)}_i \big\|^2_2,
\end{align*}
and
\begin{align*}
	L_{\gamma^{(j)}}\big(x^{(j+1)},&v^{(j+1)},g^{(j+1)},z^{(j+1)},m^{(j+1)},\mathbf{p}^{(j+1)} \big)-
	L_{\gamma^{(j)}}\big(x^{(j+1)},v^{(j+1)},g^{(j+1)},\\
	&z^{(j+1)},m^{(j+1)},\mathbf{p}^{(j)} \big)= 
	\sum^{4}_{i=1} \frac{1}{\gamma^{(j)}_i}\big\| p^{(j+1)}_i-p^{(j)}_i \big\|^2_2.
\end{align*}

Considering the subproblems for the values of 
$m,z,g$ and $v$ the following relations can be written:

\begin{align*}
	L_{\gamma^{(j)}}\big(x^{(j+1)},&v^{(j+1)},g^{(j+1)},z^{(j+1)},m^{(j+1)},\mathbf{p}^{(j)} \big)-
	L_{\gamma^{(j)}}\big(x^{(j+1)},v^{(j+1)},g^{(j+1)},\\
	&z^{(j+1)},m^{(j)},\mathbf{p}^{(j)} \big)\leq 0,
\end{align*}
\begin{align*}
	L_{\gamma^{(j)}}\big(x^{(j+1)},&v^{(j+1)},g^{(j+1)},z^{(j+1)},m^{(j)},\mathbf{p}^{(j)} \big)-
	L_{\gamma^{(j)}}\big(x^{(j+1)},v^{(j+1)},g^{(j+1)},\\
	&z^{(j)},m^{(j)},\mathbf{p}^{(j)} \big)\leq 0,
\end{align*}
\begin{align*}
	L_{\gamma^{(j)}}\big(x^{(j+1)},&v^{(j+1)},g^{(j+1)},z^{(j)},m^{(j)},\mathbf{p}^{(j)} \big)-
	L_{\gamma^{(j)}}\big(x^{(j+1)},v^{(j+1)},g^{(j)},\\
	&z^{(j)},m^{(j)},\mathbf{p}^{(j)} \big)\leq 0,
\end{align*}
\begin{align*}
	L_{\gamma^{(j)}}\big(x^{(j+1)},&v^{(j+1)},g^{(j)},z^{(j)},m^{(j)},\mathbf{p}^{(j)} \big)-
	L_{\gamma^{(j)}}\big(x^{(j+1)},v^{(j)},g^{(j)},\\
	&z^{(j)},m^{(j)},\mathbf{p}^{(j)} \big)\leq 0.
\end{align*}

For the $x$-subproblem, taking the second derivative of the objective function with respect to $x$ yields the Hessian matrix:
\[ \mathbf{H} = \gamma_1 k^T k + \gamma_2 W^T W + \gamma_3 (\nabla^{\beta})^T \nabla^{\beta} + \gamma_4 I.
\]
For all $0\neq x \in \mathbb{R}^n$, we have 
\[ x^T\mathbf{H}x = \gamma_1 \| kx\|^2_2 + \gamma_2 \| Wx\|^2_2 + \gamma_3 \|\nabla^{\beta}x\|^2_2 + \gamma_4 \|x\|^2_2>0.
\]
Then the Hessian matrix $\mathbf{H}$ is strictly positive definite. This guarantees that the $x$-subproblem is strongly convex. Consequently, there exists a positive constant $C > 0$ such that the following descent inequality holds:
\begin{align*}
	L_{\gamma^{(j)}}\big(x^{(j+1)}, v^{(j)}, g^{(j)}, z^{(j)}, m^{(j)}; \mathbf{p}^{(j)}\big) &- L_{\gamma^{(j)}}\big(x^{(j)}, v^{(j)}, g^{(j)}, z^{(j)}, m^{(j)}; \mathbf{p}^{(j)}\big)\\
	&\leq -\frac{C}{2} \|x^{(j+1)} - x^{(j)}\|^2_2.
\end{align*}
Then, by combining the above discussion, we have:
\begin{align*}
	L_{\gamma^{(j+1)}}^{(j+1)} - L_{\gamma^{(j)}}^{(j)} \leq \sum_{i=1}^{4} \frac{\gamma^{(j+1)}_i + \gamma^{(j)}_i}{2\big(\gamma_i^{(j)}\big)^2} \big\|p_i^{(j+1)} - p_i^{(j)}\big\|^2_2 - \frac{C}{2}\big\|x^{(j+1)} - x^{(j)}\big\|^2_2.
\end{align*}
In the first step of the proof, it has been shown that the $p_i$'s are bounded. Hence, one can find values such as $M_i$'s such that
$
\|p^{j+1}_i - p^{j}_i\| \le M_i.
$
Consequently by considering \eqref{Upgam}, we have:

\[
L^{(j+1)}_{\gamma^{(j+1)}} - L^{(0)}_{\gamma^{(0)}} \le \sum_{i=1}^{4} \left[ \frac{(\sigma_i+1)M_i}{2\gamma_i^{(0)}} \left( \frac{1 - (\frac{1}{\sigma_i})^{j+1}}{1 - \frac{1}{\sigma_i}} \right) \right] - \frac{C}{2}\sum_{i=0}^{j}\|x^{(i+1)} - x^{(i)}\|_2^2.
\]
Now when 
$j\rightarrow \infty$ we get:

\begin{align}\label{ewww}
\lim_{j \to \infty}L^{(j+1)}_{\gamma^{(j+1)}} - L^{(0)}_{\gamma^{(0)}} \le \sum_{i=1}^{4}  \frac{\sigma_i(\sigma_i+1)  M_i}{2(\sigma_i-1)\gamma_i^{(0)}}  -\lim_{j \to \infty} \frac{C}{2}\sum_{i=0}^{j}\|x^{(i+1)} - x^{(i)}\|_2^2.
\end{align}
then
\begin{align}\label{nneq1}
\lim_{j \to \infty}L^{(j+1)}_{\gamma^{(j+1)}}  \le  L^{(0)}_{\gamma^{(0)}} + \sum_{i=1}^{4}  \frac{\sigma_i(\sigma_i+1)  M_i}{2(\sigma_i-1)\gamma_i^{(0)}}< \infty.
\end{align}
By using norm proparties we can write:

\begin{align}
L^{(j+1)}_{\gamma^{(j+1)}} &= \mu \langle \mathbf{1}, v^{(j+1)} - y\log(v^{(j+1)}) \rangle + \lambda \|g^{(j+1)}\|_{\text{MCP}} + \|z^{(j+1)}\|_{\omega,1} + \delta_{\mathbb{R}^{+}}(m^{(j+1)}) \nonumber\\
&+ \frac{\gamma_1^{(j+1)}}{2}\|kx^{(j+1)} - v^{(j+1)} + \frac{1}{\gamma_1^{(j+1)}}p_1^{(j+1)}\|_2^2 -
\frac{\|p_1^{(j+1)}\|_2^2}{2\gamma_1^{(j+1)}} \nonumber\\ 
&+ \frac{\gamma_2^{(j+1)}}{2}\|Wx^{(j+1)} - g^{(j+1)} +
 \frac{1}{\gamma_2^{(j+1)}}p_2^{(j+1)}\|_2^2 - \frac{\|p_2^{(j+1)}\|_2^2}{2\gamma_2^{(j+1)}}\nonumber\\ &+\frac{\gamma_3^{(j+1)}}{2}\|\nabla^{\beta}x^{(j+1)} - z^{(j+1)} + \frac{1}{\gamma_3^{(j+1)}}p_3^{(j+1)}\|_2^2 -\frac{\|p_3^{(j+1)}\|_2^2}{2\gamma_3^{(j+1)}}\nonumber\\ 
 &+ \frac{\gamma_4^{(j+1)}}{2}\|x^{(j+1)} - m^{(j+1)} +\frac{1}{\gamma_4^{(j+1)}}p_4^{(j+1)}\|_2^2 - \frac{\|p_4^{(j+1)}\|_2^2}{2\gamma_4^{(j+1)}} \label{nneq2}.
\end{align}
Then since $\|\cdot\|_2$, 
 MCP norm, weighted $\ell_1$-norm, and indicator constraint functions are bounded below by $0$,
and since
the Poisson component $v - y\log(v)$ achieves its absolute minimum point when $v = y$, we get
\[
\lim_{j \to \infty} L^{(j+1)}_{\gamma^{(j+1)}} \ge \mu \langle \mathbf{1}, y - y\log(y) \rangle - \lim_{j \to \infty} \left( \sum_{i=1}^{4} \frac{\|p_i^{(j+1)}\|_2^2}{2\gamma_i^{(j+1)}} \right)
= \mu \langle \mathbf{1}, y - y\log(y) \rangle > -\infty.
\]
Therefore based on above dicession,  the sequence $\{L_{\gamma^{(j)}}^{(j)}\}$ is bounded.

\begin{align*}
0 \le \frac{C}{2} \sum_{i=0}^{j} \|x^{(i+1)} - x^{(i)}\|_2^2 \le L^{(0)} - \lim_{j \to \infty} L^{(j+1)} + \sum_{m=1}^{4} \frac{M_m \sigma_m}{2\gamma_m^{(0)}(\sigma_m - 1)}.
\end{align*}
\end{proof}

\begin{lem}\label{lm3}
	The sequences $\{x^{(j)}\}, \{v^{(j)}\}, \{g^{(j)}\}, \{z^{(j)}\},$ and $\{m^{(j)}\}$ generated by the algorithm are bounded if the internal MCP parameters $\gamma$, $\eta$  and regularization parameter $\lambda$ satisfies:
	\begin{align}\label{eqqrr1}
	 L^{(0)} + \sum_{m=1}^{4} \frac{M_m \sigma_m}{2\gamma_m^{(0)}(\sigma_m - 1)}  < \frac{1}{2}\lambda\gamma^2 \eta.
	\end{align}
\end{lem}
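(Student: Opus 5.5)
The plan is to use the uniform upper bound on the augmented Lagrangian from Lemma \ref{lm2}, together with the decomposition \eqref{nneq2}, and show that if any auxiliary sequence were unbounded, some nonnegative term in $L^{(j)}_{\gamma^{(j)}}$ would blow up. Write $B:=L^{(0)} + \sum_{m=1}^{4} \frac{M_m \sigma_m}{2\gamma_m^{(0)}(\sigma_m - 1)}$. By \eqref{nneq1} and the telescoped estimate at the end of the proof of Lemma \ref{lm2}, $L^{(j)}_{\gamma^{(j)}}\le B$ for every $j$. In \eqref{nneq2} every term is bounded below except $-\sum_i \|p_i^{(j)}\|_2^2/(2\gamma_i^{(j)})$. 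By Lemma \ref{lm2} the $p_i$ are bounded and $\gamma_i^{(j)}=\sigma_i^j\gamma_i^{(0)}\to\infty$, so this term is uniformly small. Hence each nonnegative piece in \eqref{nneq2} is bounded above by $B$ plus a constant depending only on $y,\mu$ and the multiplier bounds. The Poisson term is bounded below by $\mu\langle\mathbf 1,y-y\log y\rangle$.

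\textbf{Step 1 ($v$ and $g$).} The map $t\mapsto t-y_i\log t$ is coercive as $t\to\infty$. A uniform bound on $\mu\langle \mathbf{1},v^{(j)}-y\log v^{(j)}\rangle$ therefore bounds $v^{(j)}$ above; the lower bound is the assumption of Lemma \ref{lm2}. For $g$ the penalty saturates: $h_{\gamma,\eta}(t)=\tfrac12\gamma^2\eta$ for $|t|\ge\gamma\eta$. Coercivity of $\lambda\|g\|_{\rm MCP}$ is thus unavailable, and this is exactly where condition \eqref{eqqrr1} enters. The idea is that if some component $|g^{(j)}_i|\ge\gamma\eta$, then $\lambda\|g^{(j)}\|_{\rm MCP}\ge\frac12\lambda\gamma^2\eta>B$. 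After absorbing the remaining terms, which I would argue are $\ge 0$ up to the vanishing multiplier correction, this contradicts $L^{(j)}\le B$. So every component satisfies $|g^{(j)}_i|<\gamma\eta$, and $g$ is bounded.

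\textbf{Step 2 ($z$, $x$, $m$).} Since the weights satisfy $\omega_i\ge 1/(\|\nabla^\beta x\|_\infty+\epsilon)>0$, the reweighted $\ell_1$ term bounds $z^{(j)}$ once a lower bound on the weights is available. To avoid circularity I would instead bound $z$ through the constraint residual, as follows. From the quadratic terms in \eqref{nneq2}, $\gamma_i^{(j)}\|\text{residual}_i+p_i/\gamma_i^{(j)}\|^2\le 2B'$, so the residuals $kx-v$, $Wx-g$, $\nabla^\beta x-z$ and $x-m$ tend to zero and are in particular bounded. Then $Wx^{(j)}=g^{(j)}+O(1)$ is bounded. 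Because the framelet transform satisfies $W^TW=I$ (tight frame), $\|x^{(j)}\|_2=\|Wx^{(j)}\|_2$ is bounded. Next, $m^{(j)}=x^{(j)}+O(1)$ and $z^{(j)}=\nabla^\beta x^{(j)}+O(1)$ are bounded because $\nabla^\beta$ is a bounded linear operator. This also recovers the bound on $v$ independently.

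\textbf{Main obstacle.} The delicate step is Step 1 for $g$: showing that the other terms in \eqref{nneq2} cannot be negative enough to offset $\frac12\lambda\gamma^2\eta$. The Poisson term may be negative, since $y-y\log y$ can be negative. I would handle this by subtracting the constant $\mu\langle\mathbf 1,y-y\log y\rangle$, which is implicitly normalised into $L^{(0)}$ in the way \eqref{eqqrr1} is stated. The multiplier correction $\sum\|p_i\|^2/(2\gamma_i^{(j)})$ tends to zero, so the strict inequality in \eqref{eqqrr1} gives room for all sufficiently large $j$. The finitely many early iterates are trivially bounded.
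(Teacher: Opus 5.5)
Your proposal is correct in the same sense as the paper's proof and follows essentially its route. The upper bound on the augmented Lagrangian from Lemma~\ref{lm2}, together with saturation of the MCP at $\frac12\gamma^2\eta$ and condition~\eqref{eqqrr1}, rules out any component of $g^{(j)}$ reaching $\gamma\eta$; then $x^{(j)}$, $m^{(j)}$, $z^{(j)}$, $v^{(j)}$ are bounded because $g^{(j)}$ and the constraint residuals are (the paper reads the residual bounds off \eqref{Upp1}--\eqref{Upp4} rather than from the quadratic terms in \eqref{nneq2}, and your explicit use of $W^TW=I$ supplies the injectivity of $W$ that the paper leaves implicit).

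The sign problem you flag for the Poisson term is genuine, and the paper skips it silently: it just asserts $\lambda\|g^{(j+1)}\|_{\text{MCP}}\le L^{(j+1)}_{\gamma^{(j+1)}}$ in the limit. Your normalisation fix works, but it amounts to reading $L^{(0)}$ in \eqref{eqqrr1} with the (typically very negative) constant $\mu\langle\mathbf{1},y-y\log y\rangle$ subtracted, i.e.\ a stronger hypothesis than the literal one.
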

\begin{proof}
By using \eqref{nneq1}, \eqref{nneq2} and since
$\lim_{j\rightarrow \infty}\frac{\|p_2^{(j+1)}\|_2^2}{2\gamma_2^{(j+1)}}=0$, we get
\begin{align}\label{nneq3}
\lim_{j \rightarrow \infty } \lambda \|g^{(j+1)}\|_{\text{MCP}} \leq \lim_{j \to \infty}L^{(j+1)}_{\gamma^{(j+1)}} < \frac{1}{2}\lambda\gamma^2 \eta.
\end{align}
Suppose the sequence $\{g^{(j)}\}$ is unbounded. This implies that as $j \to \infty$, there exists at least one coordinate component $i$ such that $|g_i^{(j+1)}| \to \infty$. By the definition of the Minimax Concave Penalty function, when $|g_i^{(j+1)}| \ge \gamma\eta$, the scalar penalty reaches its maximum constant threshold,
$
h_{\gamma,\eta}(g_i^{(j+1)}) = \frac{1}{2}\gamma^2\eta.
$
Since the total vector norm is the sum of non-negative component penalties, we can isolate this diverging index:
\[
\lambda \|g^{(j+1)}\|_{\text{MCP}} \geq \lambda h_{\gamma,\eta}(g_i^{(j+1)}) = \frac{1}{2}\lambda \gamma^2 \eta.
\]
Taking the limit on both sides yields $\lambda \lim_{j\to\infty} \|g^{(j+1)}\|_{\text{MCP}} \geq \frac{1}{2}\lambda \gamma^2 \eta$, which directly contradicts the upper boundary condition established in \eqref{nneq3}. 
The boundedness of the sequences 
$\{x^{(j)}\}, \{v^{(j)}\}, \{z^{(j)}\},$ and $\{m^{(j)}\}$ follows from the boundedness of the sequences 
$\{g^{(j)}\}$ and 
$\{p^{(j)}_i\},~i=1,\cdots,4$  along with Eqs. \eqref{Upp1}-\eqref{Upp4}.
\end{proof}

\begin{thm}
If the internal MCP parameters $\gamma$, $\eta$  and regularization parameter $\lambda$ satisfies \eqref{eqqrr1},
then sequence
$\{X^{(j)}\}:=\{x^{(j)}, v^{(j)}, g^{(j)}, z^{(j)}, m^{(j)}, p_1^{(j)}, p_2^{(j)}, p_3^{(j)}, p_4^{(j)}\}$
is bounded and 
\begin{align}
&\lim_{j \to \infty} \|x^{(j+1)} - x^{(j)}\|_2 = 0, \quad \lim_{j \to \infty} \|v^{(j+1)} - v^{(j)}\|_2 = 0, \quad \lim_{j \to \infty} \|g^{(j+1)} - g^{(j)}\|_2 = 0\\
&\lim_{j \to \infty} \|z^{(j+1)} - z^{(j)}\|_2 = 0, \quad \lim_{j \to \infty} \|m^{(j+1)} - m^{(j)}\|_2 = 0.
\end{align}
\end{thm}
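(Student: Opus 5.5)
The proof combines Lemma \ref{lm2} and Lemma \ref{lm3} with the telescoped descent estimate from the proof of Lemma \ref{lm2}.

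\emph{Boundedness.} Under \eqref{eqqrr1}, Lemma \ref{lm3} gives boundedness of $\{x^{(j)}\},\{v^{(j)}\},\{g^{(j)}\},\{z^{(j)}\},\{m^{(j)}\}$. Lemma \ref{lm2} gives uniform bounds on the multipliers $p_1^{(j)},\dots,p_4^{(j)}$, namely $\|p_1\|_\infty\le M_1$, $\|p_2\|_\infty\le\lambda\gamma$, $\|p_3\|_\infty\le\|\omega\|_\infty$, together with the bound on $p_4$ obtained from the $x$-optimality relation. Together these show that $\{X^{(j)}\}$ is bounded.

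\emph{Vanishing increments of $x$.} I would use the final inequality in the proof of Lemma \ref{lm2}:
\[
0\le \frac{C}{2}\sum_{i=0}^{j}\|x^{(i+1)}-x^{(i)}\|_2^2\le L^{(0)}-\lim_{j\to\infty}L^{(j+1)}+\sum_{m=1}^4\frac{M_m\sigma_m}{2\gamma_m^{(0)}(\sigma_m-1)}.
\]
The augmented Lagrangian sequence is bounded below by $\mu\langle\mathbf 1,y-y\log y\rangle$, so the right-hand side is a finite constant independent of $j$. Hence $\sum_i\|x^{(i+1)}-x^{(i)}\|_2^2<\infty$, and therefore $\|x^{(j+1)}-x^{(j)}\|_2\to0$.

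\emph{Vanishing increments of the auxiliary variables.} I would use the multiplier updates \eqref{Upp1}--\eqref{Upp4} to write, for example,
\[
v^{(j+1)}=kx^{(j+1)}-\frac{p_1^{(j+1)}-p_1^{(j)}}{\gamma_1^{(j)}},
\]
and analogously $g^{(j+1)}=Wx^{(j+1)}-(p_2^{(j+1)}-p_2^{(j)})/\gamma_2^{(j)}$, $z^{(j+1)}=\nabla^\beta x^{(j+1)}-(p_3^{(j+1)}-p_3^{(j)})/\gamma_3^{(j)}$ and $m^{(j+1)}=x^{(j+1)}-(p_4^{(j+1)}-p_4^{(j)})/\gamma_4^{(j)}$. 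The multipliers are bounded, so $\|p_i^{(j+1)}-p_i^{(j)}\|\le M_i$. Since $\gamma_i^{(j)}=\sigma_i^j\gamma_i^{(0)}\to\infty$ by \eqref{Upgam}, the correction terms are $O(\sigma_i^{-j})$. This gives, for instance,
\[
\|v^{(j+1)}-v^{(j)}\|_2\le\|k\|\,\|x^{(j+1)}-x^{(j)}\|_2+\frac{M_1}{\gamma_1^{(j)}}+\frac{M_1}{\gamma_1^{(j-1)}}\to0,
\]
and the same estimate with $W$, $\nabla^\beta$ and $I$ in place of $k$ handles $g$, $z$ and $m$.

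\emph{Main obstacle.} The delicate step is justifying that the telescoped bound is uniform in $j$. This requires the lower bound on $L^{(j)}_{\gamma^{(j)}}$ to hold for every $j$ and not only in the limit. I would obtain it from the completed-square form \eqref{nneq2}, using $\|p_i^{(j+1)}\|_2^2/(2\gamma_i^{(j+1)})\le M_i^2/(2\gamma_i^{(0)})$, which is uniform in $j$. A secondary point is that the summability of the $p$-increment terms relies on $\gamma_i^{(j)}$ growing geometrically, which \eqref{Upgam} provides.
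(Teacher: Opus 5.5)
Your proposal is correct and follows the paper's proof essentially step for step. Boundedness comes from Lemmas \ref{lm2} and \ref{lm3}, summability of $\|x^{(j+1)}-x^{(j)}\|_2^2$ comes from the telescoped descent estimate, and the vanishing increments of $v,g,z,m$ come from the multiplier identities with $\gamma_i^{(j)}\to\infty$ plus the triangle inequality. Beyond the paper, your uniform-in-$j$ lower bound on $L^{(j)}_{\gamma^{(j)}}$ via \eqref{nneq2} is a real tightening, since the paper only bounds $\lim_{j\to\infty}L^{(j+1)}_{\gamma^{(j+1)}}$ from below without showing this limit exists.
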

\begin{proof}
The boundedness of the sequence 
$\{X^{(j)}\}$
is a direct consequence of Lemmas \ref{lm2} and \ref{lm3}. By using \eqref{ewww}, we get
\begin{align*}
 0 \le \lim_{j \to \infty}\frac{C}{2}\sum_{i=0}^{j}\|x^{(i+1)} - x^{(i)}\|_2^2
 &\le L^{(0)}_{\gamma^{(0)}} -\lim_{j \to \infty}L^{(j+1)}_{\gamma^{(j+1)}} +\sum_{i=1}^{4}  \frac{\sigma_i(\sigma_i+1)  M_i}{2(\sigma_i-1)\gamma_i^{(0)}}\\
 &\le  L^{(0)}_{\gamma^{(0)}}+\sum_{i=1}^{4}  \frac{\sigma_i(\sigma_i+1)  M_i}{2(\sigma_i-1)\gamma_i^{(0)}} <\infty.
\end{align*}
Because the right-hand side is bounded above by a finite constant, the infinite series converges, and thus
\begin{align}\label{rtt0}
\lim_{i \to \infty}\|x^{(i+1)} - x^{(i)}\|_2=0.
\end{align}
Since when $j \to \infty$ then $\gamma^{(j)}_i \to \infty $, therefore we obtain:
\begin{align}
&\|kx^{(j+1)} - v^{(j+1)}\|_2 = \frac{\|p_1^{(j+1)} - p_1^{(j)}\|_2}{\gamma_1^{(j)}} \to 0 \label{rtt1}, \\
&\|Wx^{(j+1)} - g^{(j+1)}\|_2 = \frac{\|p_2^{(j+1)} - p_2^{(j)}\|_2}{\gamma_2^{(j)}} \to 0,\nonumber\\
&\|\nabla^{\beta}x^{(j+1)} - z^{(j+1)}\|_2 = \frac{\|p_3^{(j+1)} - p_3^{(j)}\|_2}{\gamma_3^{(j)}} \to 0,\nonumber\\
&\|x^{(j+1)} - m^{(j+1)}\|_2 = \frac{\|p_4^{(j+1)} - p_4^{(j)}\|_2}{\gamma_4^{(j)}} \to 0. \nonumber
\end{align}
On the other hand, by the triangle inequality, we obtain
\begin{align*}
\| v^{(j+1)}-v^{(j)}\|_2 \leq \|v^{(j+1)}-k x^{(j+1)}\|_2+\|k(x^{(j+1)}-x^{(j)})\|_2
+\|v^{(j+1)}-kx^{(j)}\|_2.
\end{align*}
Therefore, by using the above relation and \eqref{rtt0}-\eqref{rtt1}, when $j \to \infty$, we can conclude that
$\| v^{(j+1)}-v^{(j)}\|_2 \to 0$.  Similarly, the following relations can be obtained:
\begin{align}
\|g^{(j+1)}-g^{(j)}\|_2 \to 0, \quad \|m^{(j+1)}-m^{(j)}\|_2 \to 0, \quad \|z^{(j+1)}-z^{(j)}\|_2 \to 0.
\end{align}
Boundedness of the sequence 
$\{X^{(j)}\}$
implies the existence of a convergent subsequence 
$\{X^{k_j}\}$
 with limit 
 $\{ X^{\ast}\}$. Since $L_\gamma$
 is the lower semicontinuous, we  have:
 \begin{align}\label{ew1}
 \liminf_{j \to \infty}
  L_{\gamma^{(k_j)}}\big(x^{(k_j+1)},v^{(k_j+1)}, g^{(k_j+1)},z^{(k_j)},m^{(k_j)}, \mathbf{p}^{(k_j)})\geq L_{\gamma^{\infty}}(X^{\ast}\big).
 \end{align}
On the other hand, since $g^{(k_j+1)}$ is obtained by minimizing $L_{\gamma^{(k_j)}}$ with respect to $g$, we have:
\begin{align}\label{ew2}
	\limsup_{j \to \infty} L_{\gamma^{(k_j)}}\big(x^{(k_j+1)}, v^{(k_j+1)}, g^{(k_j+1)}, z^{(k_j)}, m^{(k_j)}, \mathbf{p}^{(k_j)}\big) \leq L_{\gamma^{\infty}}(X^{\ast}).
\end{align}
Considering \eqref{ew1} and \eqref{ew2}, we obtain:
\begin{align}\label{ew3}
	\lim_{j \to \infty} L_{\gamma^{(k_j)}}\big(x^{(k_j+1)}, v^{(k_j+1)}, g^{(k_j+1)}, z^{(k_j)}, m^{(k_j)}, \mathbf{p}^{(k_j)}\big) = L_{\gamma^{\infty}}(X^{\ast}).
\end{align}
Consequently, due to the convergence of the individual components and the vanishing of the primal-dual residuals, the continuity of each constituent term in the objective function follows:
\begin{align*}
	&\lim_{j \to \infty} \langle \mathbf{1}, v^{(k_j+1)} - y \log(v^{(k_j+1)}) \rangle = \langle \mathbf{1}, v^{\ast} - y \log(v^{\ast}) \rangle, \quad \lim_{j \to \infty} \|g^{(k_j+1)}\|_{\text{MCP}} = \|g^{\ast}\|_{\text{MCP}}, \\
	&\lim_{j \to \infty} \|z^{(k_j)}\|_{\omega,1} = \|z^{\ast}\|_{\omega,1}, \quad \lim_{j \to \infty} \delta_{\mathbb{R}_+}(m^{(k_j)}) = \delta_{\mathbb{R}_+}(m^{\ast}).
\end{align*}
From the limits established in \eqref{rtt1}, taking the limit directly yields:
\[
\lim_{j \to \infty} \|Kx^{(k_j+1)} - v^{(k_j+1)}\|_2 = 0 \implies Kx^* - v^* = 0.
\]
The following primal feasibility relations are derived in an identical manner:
\[
Wx^* - g^* = 0, \quad \nabla^{\beta}x^* - z^* = 0, \quad x^* - m^* = 0.
\]
Furthermore, by using the first-order optimality conditions of the auxiliary subproblems, the iterates satisfy:
\begin{align}
	& p_1^{(k_j+1)} \in \partial \left( \mu \langle \mathbf{1}, v^{(k_j+1)} - y\log(v^{(k_j+1)}) \rangle \right), \\
	& p_2^{(k_j+1)} \in \lambda\,\partial \|g^{(k_j+1)}\|_{\text{MCP}}, \\
	& p_3^{(k_j+1)} \in \partial \|z^{(k_j)}\|_{\omega,1}, \\
	& p_4^{(k_j+1)} \in \partial \delta_{\mathbb{R}_+}(m^{(k_j)}) = N_{\mathbb{R}_+}(m^{(k_j)}),
\end{align}
where $N_{\mathbb{R}_+}(\cdot)$ denotes the normal cone to the non-negative orthant. Passing to the limit as $j \to \infty$ via the closed-graph property of the subdifferential operators yields:
\begin{align*}
	& p_1^{\ast} \in  \partial \left( \mu \langle \mathbf{1}, v^{\ast} - y\log(v^{\ast}) \rangle \right), \quad p_2^{\ast} \in \lambda\, \partial \|g^{\ast}\|_{\text{MCP}}, \\
	& p_3^{\ast} \in \partial \|z^{\ast}\|_{\omega,1}, \quad p_4^{\ast} \in N_{\mathbb{R}_+}(m^*). 
\end{align*}
Finally, substituting these relations along with the primal feasibility conditions into the limiting primal optimality relation, $K^T p_1^* +\lambda\, W^T p_2^* + (\nabla^\beta)^T p_3^* + p_4^* = 0$, 
results in:
\[
0 \in K^T \partial \left( \mu \langle \mathbf{1}, Kx^* - y\log(Kx^*) \rangle \right) +\lambda\, W^T \partial \|Wx^*\|_{\text{MCP}} + (\nabla^\beta)^T \partial \|\nabla^\beta x^*\|_{\omega,1} + N_{\mathbb{R}_+}(x^*).
\]
Then limit point $x^{\ast}$  satisfies the Karush-Kuhn-Tucker (KKT) conditions of the optimization problem. 
\end{proof}

\section{Simulations results}\label{sec4}
This section presents numerical results in order to 
evaluate the effectiveness of the proposed algorithm.
For the simulation of the numerical results, MATLAB 2014b is used on a 
system running Windows 10 (64-bit) with 
an Intel\textregistered{} Core\texttrademark{} 
i3-5005U CPU @ 2.00 GHz.
The test images are created using the \texttt{poissrnd} function for Poisson noise corruption, 
while \texttt{imfilter} and \texttt{fspecial} are used for filtering and kernel generation, respectively.
 The performance of the proposed method is evaluated  using two widely recognized metrics:
 peak signal-to-noise ratio (PSNR) and mean structural similarity index (MSSIM) 
 \cite{26a,26b,26c}.
 Let $x$ be the reference (ground-truth) image and $y$ the restored  image.
 The PSNR between two images 
$x$ and 
$y$
 is computed as
 \begin{align*}
 	\text{PSNR}(x,y)= 10 \log_{10} \frac{I_{\max}^2}{\text{MSE}(x,y)},
 \end{align*}
 where 
$I_{\max}$
denotes the maximum pixel value in the image and MSE is the mean squared error.
Also, the MSSIM index is calculated as \footnote{The source code for MSSIM is available at:
	\begin{quote}
		\url{https://www.cns.nyu.edu/~lcv/ssim/} (\cite{26c}) 
\end{quote}}
\begin{align*}
\text{MSSIM}(x,y)=\frac{1}{M}\sum_{j=1}^{M} \text{SSIM}(x_j,x_j),
\end{align*} 
where $M$ is the total number of local windows, and $x_j$ and $y_j$ represent the image patches within the $j$-th local window of images $x$ and $y$, respectively.
In the above equation, SSIM is obtained by
\begin{align*}
	\text{SSIM}(x_j,y_j)=\frac{(2\mu_{x_j}\mu_{y_j}+c_1)(2\sigma_{x_jy_j}+c_2)}{(\mu^2_{x_j}+\mu^2_{y_j}+c_1)(\sigma^2_{x_j}+\sigma^2_{y_j}+c_2)},
\end{align*}
where $\mu_{x_j}$ and $\mu_{y_j}$ denote the local means, $\sigma_{x_j}$ and
 $\sigma_{y_j}$ the local standard deviations, $\sigma_{x_jy_j}$ the local 
 covariance, and $c_1$ and $c_2$ are small positive constants.

Also, to compare the behavior of the proposed algorithm, we benchmark it against
 BM3D \cite{27}, OGS \cite{28}, and FOTV \cite{29}
\footnote{The source codes for the competing methods are available at:
\begin{quote}
		\url{http://www.cs.tut.fi/~foi/invansc/} (BM3D \cite{27}) \\
		\url{https://github.com/KSJhon/PoissonDeblur_hybrid} (OGS \cite{28}) \\
		\url{https://github.com/mujib2020/Non-blind-and-Blind-Deconvolution-under-Poisson-noise} (FOTV \cite{29})
\end{quote}}.
Since the proposed method involves several parameters, we evaluated its performance across a range of parameter values. Specifically, we tested
$\alpha \in \{1, 1.2, 1.5, 1.6, 1.8, 1.9\}$, 
$\mu \in \{2, 3, 5, 6, 7, 8, 10, 15, 20\}$, 
and $\lambda \in \{10^{-j} : j = 1, \dots, 4\}$. For the initialization parameters $\gamma^{0}_i$, we set 
$\gamma^{0}_1 \in \{5 \times 10^{-j} : j = 1, \dots, 4\}$ and $\gamma^{0}_i \in \{10^{-j} : j = 1, \dots, 4\}$ for $i = 2, 3, 4$.
Also, we set $\sigma_i = 1.01$ for $i = 1, \dots, 4$ and reported the best results.
 For the stopping criterion of the proposed algorithm, we run the algorithm for up to 400 iterations. 
 At the $i$-th iteration, if the condition Error$=\|u^{(j)}-u^{(j-1)}\|_2/\|u^{(j)}\|_2 \leq \text{tol}$
 is satisfied with 
 tol$=10^{-5}$, we break the algorithm.
 In the simulation, the images in the datasets of the above codes are used, and some images from the USC-SIPI image database
 \footnote{
 	\url{https://sipi.usc.edu/database/}}  are also used. Figure \ref{figEx0} shows the images used for the simulation part of the presented algorithm.
  \\

\begin{figure}[H]
	\centering
	\makebox[0pt]{
		\subfigure[]{\label{fig:gull}\includegraphics[width=0.2\textwidth]{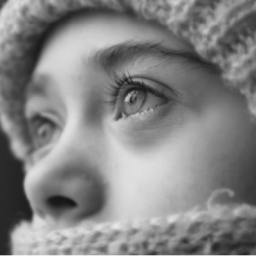}}
		\subfigure[]{\label{fig:tiger}\includegraphics[width=0.2\textwidth]{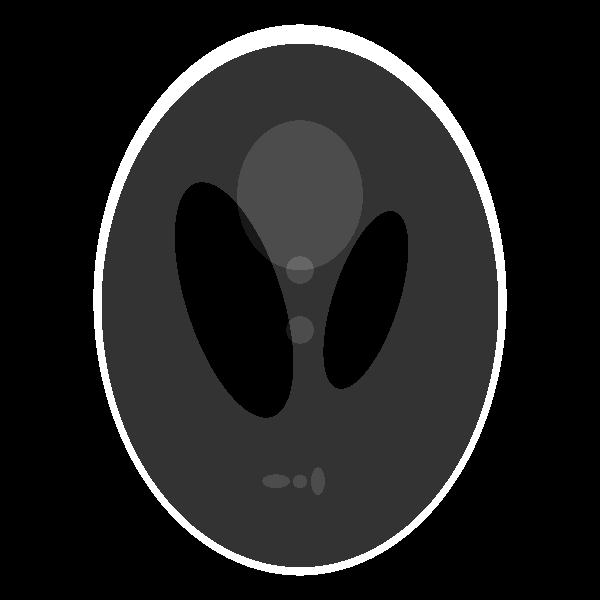}}
		\subfigure[]{\label{fig:mouse}\includegraphics[width=0.2\textwidth]{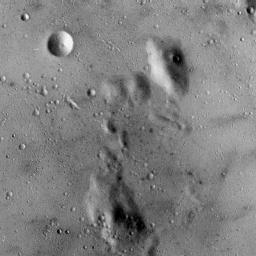}}
		\subfigure[]{\label{fig:gull}\includegraphics[width=0.2\textwidth]{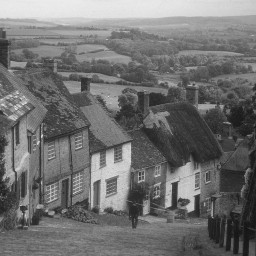}}
		\subfigure[]{\label{fig:tiger}\includegraphics[width=0.2\textwidth]{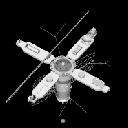}}}
	\vspace{-.2cm}
	\caption{Test images: (a) Girl image, 
		(b) Shepp-Logan Phantom image, (c) \texttt{5.1.09}-Moon surface, 
		(d) Hill image, (e) Satellite}
	\label{figEx0}
\end{figure}

 As a first case study, we chose the Girl ($256 \times 256$), and blurred by Gaussian PSF 
 with size ($7 \times 7$) and standard deviation of 
 $\sqrt{2}$. We then performed deblurring at four different peak intensity levels to evaluate the 
 robustness of the proposed method. The quantitative comparison of the deblurring results, in 
 terms of PSNR and MSSIM, against state-of-the-art methods is summarized in Table \ref{tabEx1}.
 Furthermore, to illustrate a visual comparison of the output results, the reconstructed images 
 for the Girl image are presented and compared in Figure \ref{figEx1}. In this example and the 
 other examples, to demonstrate the accuracy of the proposed method and provide a more detailed 
 comparison, a selected region of the image has been cropped and magnified.\\
 
\begin{center}
	\begin{table}[H]
		\caption{\small{Compared results of	PSNR and MSSIM for different
				peaks of the Girl ($256 \times 256$).}}
		\label{tabEx1}
		\centering
		\small
		\begin{tabular}{ccccccccc}
			\hline
			& \multicolumn{2}{c}{BM3D \cite{27}} &\multicolumn{2}{c}{OGS \cite{28}}  &\multicolumn{2}{c}{FOTV \cite{29}}
			&\multicolumn{2}{c}{Proposed method} \\
			\cmidrule(l){2-3} \cmidrule(l){4-5} \cmidrule(l){6-7}  \cmidrule(l){8-9} 
			Peak  &PSNR &MSSIM &PSNR &MSSIM &PSNR &MSSIM&PSNR &MSSIM\\
			\cmidrule(l){1-1} 		\cmidrule(l){2-3} \cmidrule(l){4-5} \cmidrule(l){6-7}  \cmidrule(l){8-9} 
			255   &31.597 &0.92298 &30.495 &0.92645 &31.352 &0.92548 &\textbf{31.810 }&\textbf{0.92720}\\
			127.5 &30.941 &0.90387 &29.444 &0.90120 &30.532 &0.91346 &\textbf{31.216} &\textbf{0.91416}\\
			51    &29.860 &0.85419 &27.871 &0.84467 &29.437 &0.86550 &\textbf{32.588} &\textbf{0.87879}\\
			25.5  &28.285 &0.77034 &26.567 &0.75651 &28.012 &0.76263 &\textbf{29.266} &\textbf{0.84865}\\
			\hline
		\end{tabular}
	\end{table}
\end{center}

\begin{figure}[H]
	\centering
	\makebox[0pt]{
		\subfigure[]{\label{fig:gull}\includegraphics[width=0.3\textwidth]{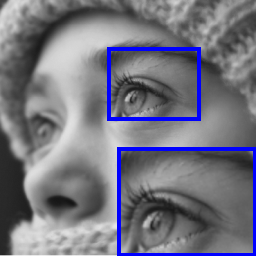}}
		\subfigure[]{\label{fig:tiger}\includegraphics[width=0.3\textwidth]{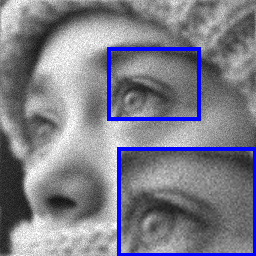}}
		\subfigure[]{\label{fig:mouse}\includegraphics[width=0.3\textwidth]{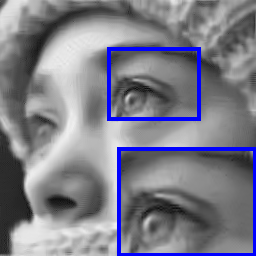}}}
	\\
	\vspace{-.3cm}
	\makebox[0pt]{
		\subfigure[]{\label{fig:gull}\includegraphics[width=0.3\textwidth]{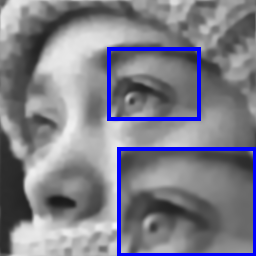}}
		\subfigure[]{\label{fig:tiger}\includegraphics[width=0.3\textwidth]{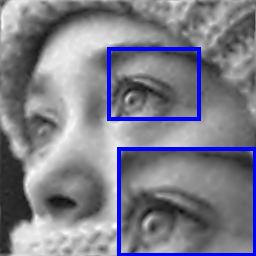}}
		\subfigure[]{\label{fig:mouse}\includegraphics[width=0.3\textwidth]{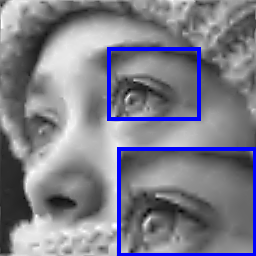}}}
	\vspace{-.2cm}
	\caption{Visual comparison of the Girl images with peak 256: (a) clear input image, 
		(b) blurred and noisy image, (c) BM3D \cite{27}, 
		(d) OGS \cite{28}, (e) FOTV \cite{29}, and (f) proposed method.}
	\label{figEx1}
\end{figure}

In the next example,
 we chose Shepp-Logan Phantom ($600 \times 600$), and blurred by Gaussian PSF 
with size ($9 \times 9$) and standard deviation of 
$\sqrt{3}$.  The PSNR and SSIM results for this image are
 compared in Table \ref{tabEx2}. Additionally, the visual comparison of the output images is presented in Figure \ref{figEx2}.

 \begin{center}
 	\begin{table}[H]
 		\caption{\small{Compared results of	PSNR and MSSIM for different
 				 peaks of Shepp-Logan Phantom ($600 \times 600$).}}
 		\label{tabEx2}
 		\centering
 		\small
 		\begin{tabular}{ccccccccccccccc}
 			\hline
 			& \multicolumn{2}{c}{BM3D \cite{27}} &\multicolumn{2}{c}{OGS \cite{28}}  &\multicolumn{2}{c}{FOTV \cite{29}}
 			&\multicolumn{2}{c}{Proposed method} \\
 			\cmidrule(l){2-3} \cmidrule(l){4-5} \cmidrule(l){6-7}  \cmidrule(l){8-9} 
 			Peak  &PSNR &MSSIM &PSNR &MSSIM &PSNR &MSSIM&PSNR &MSSIM\\
 			 	\cmidrule(l){1-1} 		\cmidrule(l){2-3} \cmidrule(l){4-5} \cmidrule(l){6-7}  \cmidrule(l){8-9} 
 			255   &28.913 &0.93805 &28.393 &0.96687   &32.210 &0.97961 &\textbf{33.852} &\textbf{0.98054}\\
 	    	127.5 &28.857 &0.92046 &28.002 &0.95849   &31.602 &\textbf{0.98288} &\textbf{33.167 }&0.98257\\
 			51    &28.744 &0.9246  &27.252 &0.9365    &30.791 &\textbf{0.97705} &\textbf{31.149} &0.97052\\
 			25.5  &28.584 &0.93979 &26.310 &0.92156   &29.999 &0.94919 &\textbf{30.847} &\textbf{0.95140}\\
 			\hline
 		\end{tabular}
 	\end{table}
 \end{center}

\begin{figure}[H]
	\centering
	\makebox[0pt]{
		\subfigure[]{\label{fig:gull}\includegraphics[width=0.3\textwidth]{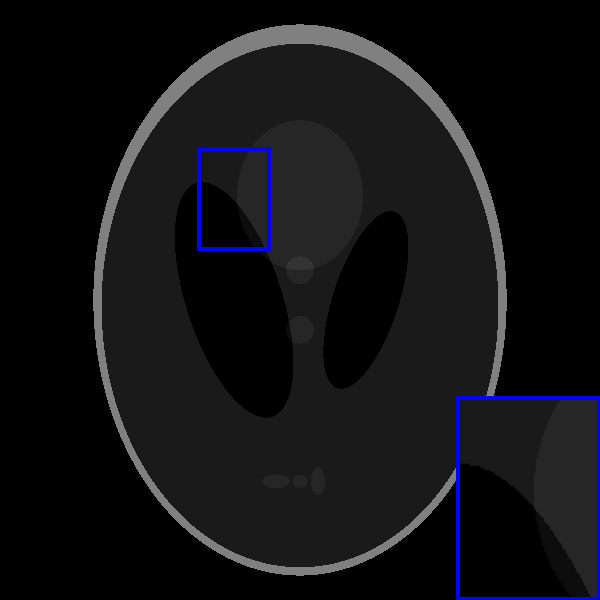}}
		\subfigure[]{\label{fig:tiger}\includegraphics[width=0.3\textwidth]{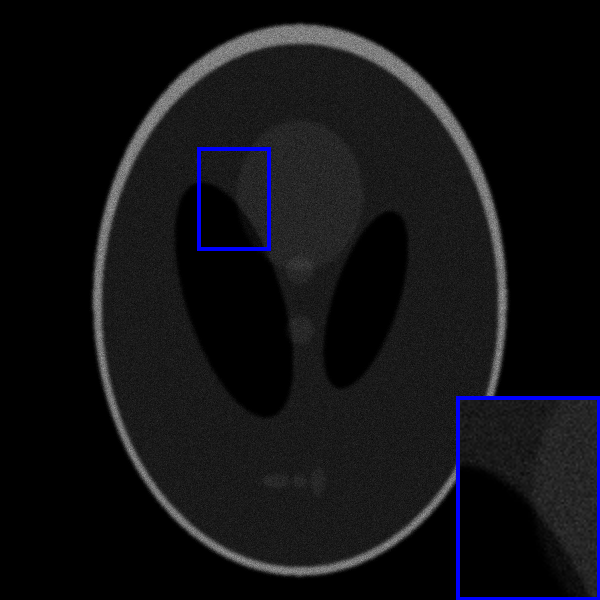}}
		\subfigure[]{\label{fig:mouse}\includegraphics[width=0.3\textwidth]{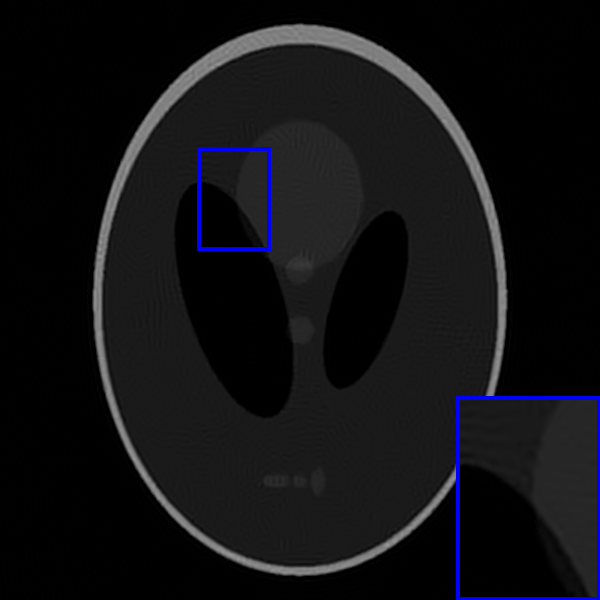}}}
	\\
	\vspace{-.3cm}
	\makebox[0pt]{
		\subfigure[]{\label{fig:gull}\includegraphics[width=0.3\textwidth]{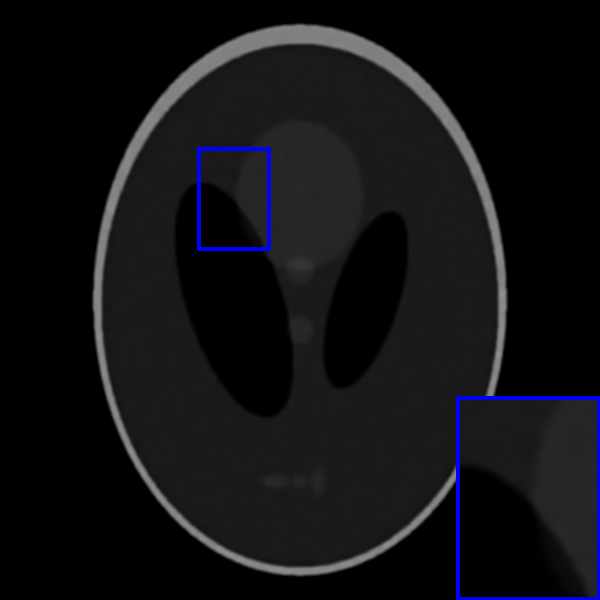}}
		\subfigure[]{\label{fig:tiger}\includegraphics[width=0.3\textwidth]{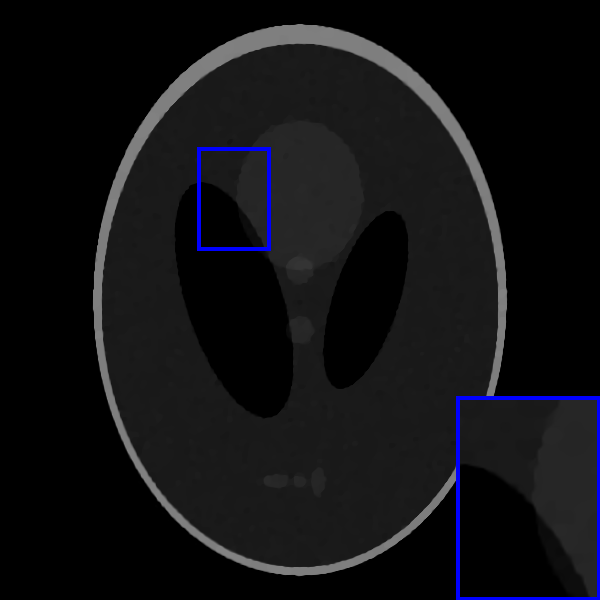}}
		\subfigure[]{\label{fig:mouse}\includegraphics[width=0.3\textwidth]{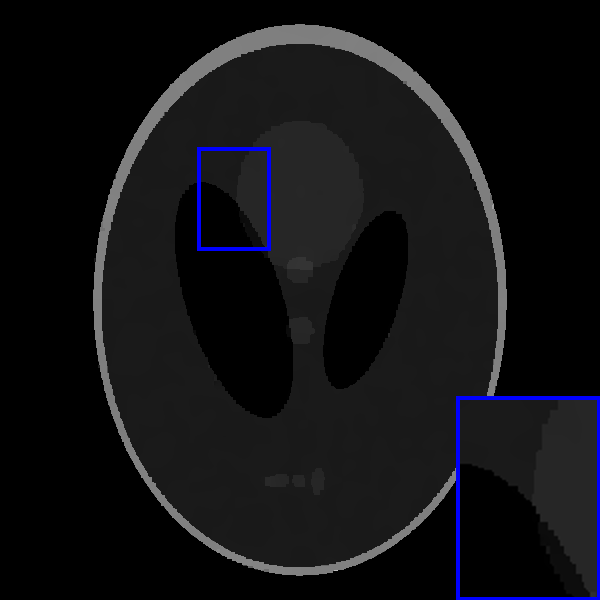}}}
	\vspace{-.2cm}
	\caption{Visual comparison of Shepp-Logan Phantom images with peak 127.5: (a) clear input image, 
		(b) blurred and noisy image, (c) BM3D \cite{27}, 
		(d) OGS \cite{28}, (e) FOTV \cite{29}, and (f) proposed method.}
	\label{figEx2}
\end{figure}

As mentioned previously, Poisson noise affects images acquired from telescopes, 
causing the loss of fine details in the images. 
To simulate this scenario, we selected the image \texttt{5.1.09}-Moon surface ($256 \times 256$) from 
the USC-SIPI Image Database. 
We then blurred it using a motion PSF with a motion length of 
 $15$ and an angle of $45^\circ$, and subsequently 
added Poisson noise to it.
The results related to this image are presented in Table  \ref{tabEx3} and Figure \ref{figEx3}. 
As can be observed in Figure \ref{figEx3}, a portion of the lunar mountains
has been lost due to noise and blurring processes and is no longer 
visible. However, the image restored by the proposed method 
has successfully recovered this part of the image.
Considering the three examples examined so far and comparing the numerical results and restored images, 
it can be seen that the proposed algorithm has an acceptable performance in image restoration.

\begin{center}
	\begin{table}[H]
		\caption{\small{Compared results of	PSNR and MSSIM for different
				peaks of \texttt{5.1.09}-Moon surface ($256 \times 256$).}}
		\label{tabEx3}
		\centering
		\small
		\begin{tabular}{ccccccccccccccc}
			\hline
			& \multicolumn{2}{c}{BM3D \cite{27}} &\multicolumn{2}{c}{OGS \cite{28}}  &\multicolumn{2}{c}{FOTV \cite{29}}
			&\multicolumn{2}{c}{Proposed method} \\
			\cmidrule(l){2-3} \cmidrule(l){4-5} \cmidrule(l){6-7}  \cmidrule(l){8-9} 
			Peak  &PSNR &MSSIM &PSNR &MSSIM &PSNR &MSSIM&PSNR &MSSIM\\
			\cmidrule(l){1-1} 		\cmidrule(l){2-3} \cmidrule(l){4-5} \cmidrule(l){6-7}  \cmidrule(l){8-9} 
			255   &27.889 &0.55673 &26.187 &0.54818 &28.506 &\textbf{0.58312} &\textbf{28.643}&0.57825\\
			127.5 &27.084 &0.54901 &25.467 &0.53996 &27.402 &0.55833 &\textbf{28.233}&\textbf{0.55945}\\
			51    &27.609 &0.52627 &23.875 &0.51026 &25.556 &0.52068 &\textbf{30.962} &\textbf{0.53581}\\
			25.5  &26.205 &0.48067 &23.710 &0.46878 &24.985 &0.47692 &\textbf{27.743} &\textbf{0.52453}\\
			\hline
		\end{tabular}
	\end{table}
\end{center}

\begin{figure}[H]
	\centering
	\makebox[0pt]{
		\subfigure[]{\label{fig:gull}\includegraphics[width=0.3\textwidth]{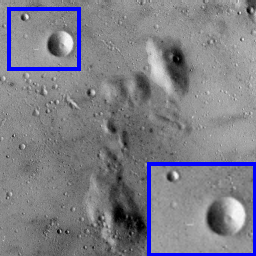}}
		\subfigure[]{\label{fig:tiger}\includegraphics[width=0.3\textwidth]{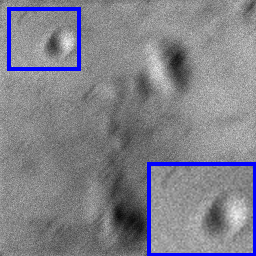}}
		\subfigure[]{\label{fig:mouse}\includegraphics[width=0.3\textwidth]{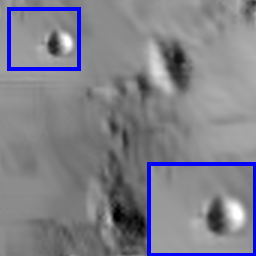}}}
	\\
	\vspace{-.3cm}
	\makebox[0pt]{
		\subfigure[]{\label{fig:gull}\includegraphics[width=0.3\textwidth]{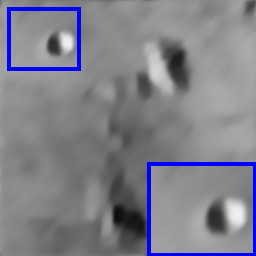}}
		\subfigure[]{\label{fig:tiger}\includegraphics[width=0.3\textwidth]{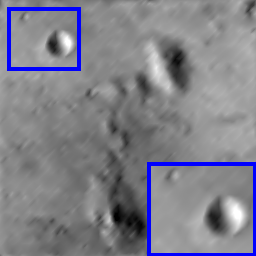}}
		\subfigure[]{\label{fig:mouse}\includegraphics[width=0.3\textwidth]{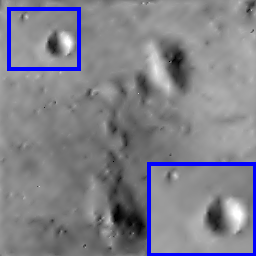}}}
	\vspace{-.2cm}
	\caption{Visual comparison of \texttt{5.1.09}-Moon surface ($256 \times 256$) images with peak 256: (a) clear input image, 
		(b) blurred and noisy image, (c) BM3D \cite{27}, 
		(d) OGS \cite{28}, (e) FOTV \cite{29}, and (f) proposed method.}
	\label{figEx3}
\end{figure}

In the previous sections, the convergence of the method has been theoretically investigated. 
In this section, to numerically evaluate the convergence of the method, the values of PSNR 
and Error for the three previous examples are plotted in Figures \ref{figError}  and \ref{figPsnr} for different iterations. 
As can be observed from the plots, despite the presence of fluctuations in some peaks, in general, 
the error gradually decreases and the PSNR value approaches a constant number.

\begin{figure}[H]
	\centering
	\makebox[0pt]{
		\subfigure[]{\label{fig:gull}\includegraphics[width=0.42\textwidth]{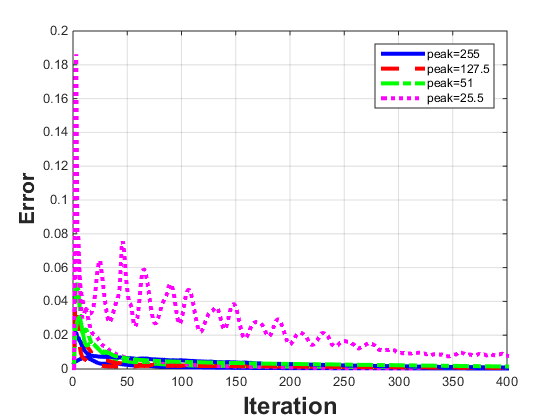}}
		\subfigure[ ]{\label{fig:tiger}\includegraphics[width=0.42\textwidth]{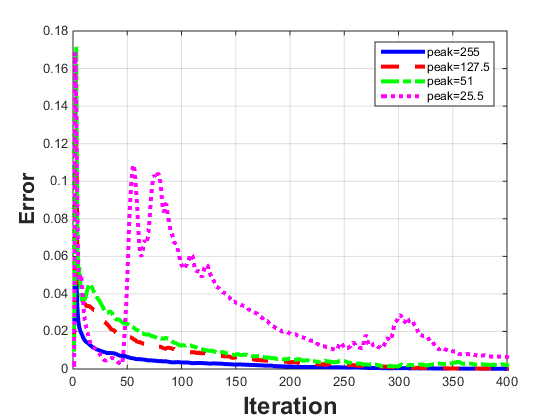}}
		\subfigure[]{\label{fig:mouse}\includegraphics[width=0.42\textwidth]{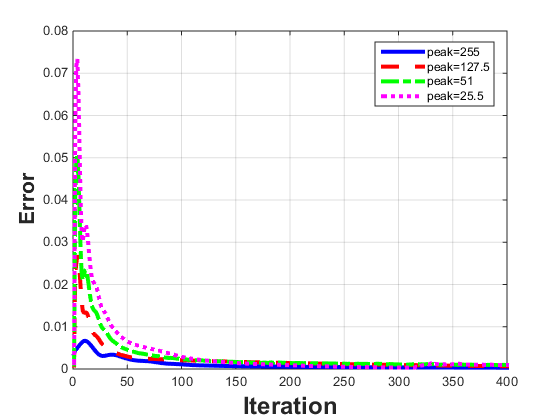}}}
		\vspace{-.2cm}
	\caption{ Error curves for test images with different peaks: (a) Girl,  (b) Shepp-Logan Phantom image, (c) \texttt{5.1.09}-Moon surface image.}
	\label{figError}
\end{figure}

\begin{figure}[H]
	\centering
	\makebox[0pt]{
		\subfigure[]{\label{fig:gull}\includegraphics[width=0.42\textwidth]{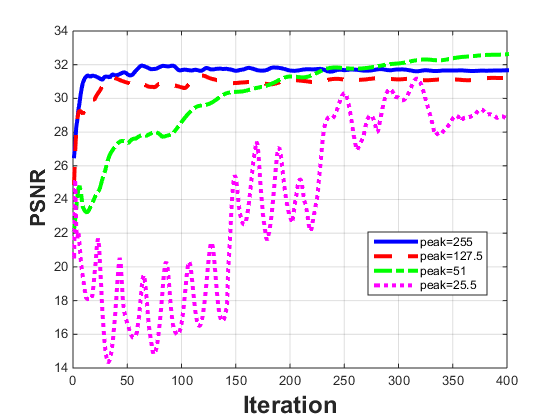}}
		\subfigure[]{\label{fig:tiger}\includegraphics[width=0.42\textwidth]{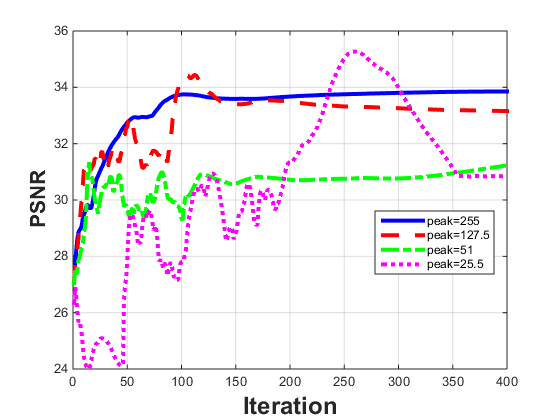}}
		\subfigure[]{\label{fig:mouse}\includegraphics[width=0.42\textwidth]{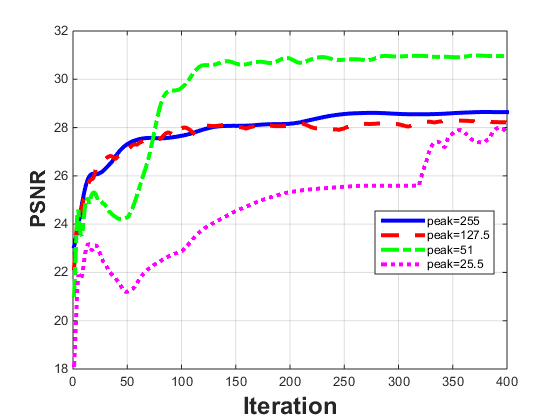}}}
	\vspace{-.2cm}
	\caption{PSNR curves for test images with different peaks: (a) Girl,  (b) Shepp-Logan Phantom image, (c) \texttt{5.1.09}-Moon surface image.}
	\label{figPsnr}
\end{figure}

The TV based methods usually have several parameters, and these parameters change the solution 
by adding the cost term for each regularizer in the objective function. In this proposed model, 
we consider the three main parameters as: $\beta$ for the derivative order, 
$\lambda$ and $\mu$ for regularization parameters. 
To show the effect of these parameters, and to analyze the sensitivity of the output, 
we plot the amount of change for these parameters for the PSNR value.
In this case, Hill image is considered as test image and $\sigma_i=1.01,~i=1,\cdots,4$, $\gamma_1=0.5,\gamma_2=\gamma_3=0.01$ and
$\gamma_4=0.001$ are used as input parameters. We then changed the parameters  $\beta, \mu$ and $\lambda$. 
The amount of change proportional to these parameters is shown in Figure \ref{fig7}. 
According to the results, it can be seen that the choice of parameter affects the output.
In the  proposed algorithm, we use grid search over the domain to find 
the appropriate parameters.

\begin{figure}[H]
	\centering
	\makebox[0pt]{
		\subfigure[]{\label{fig:gull}\includegraphics[width=0.42\textwidth]{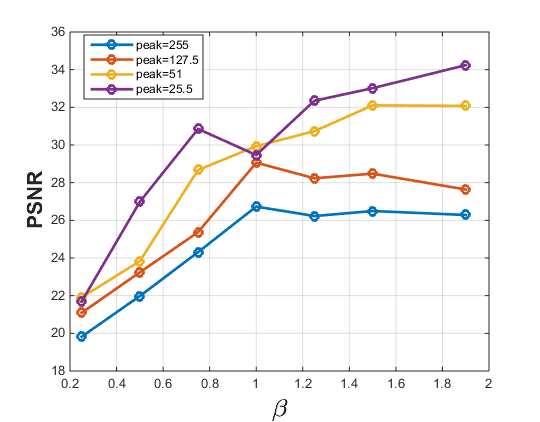}}
		\subfigure[]{\label{fig:tiger}\includegraphics[width=0.42\textwidth]{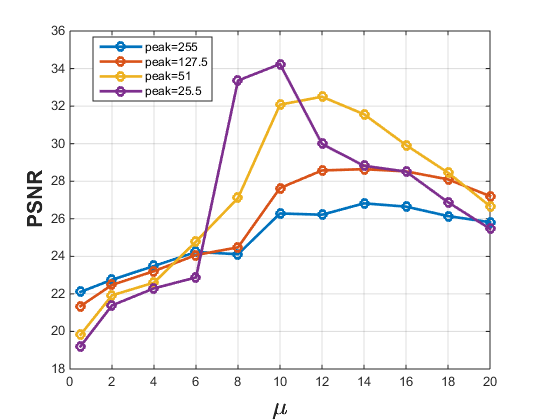}}
		\subfigure[]{\label{fig:mouse}\includegraphics[width=0.42\textwidth]{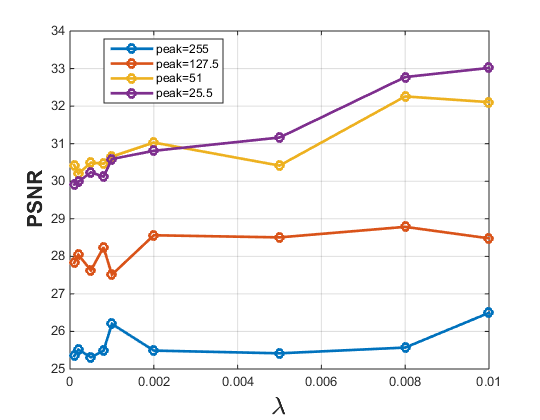}}}
	\vspace{-.2cm}
	\caption{Sensitivity analysis of (a) $\beta$ parameter, (b) $\mu$ parameter, (c) $\lambda$ parameter.}
	\label{fig7}
\end{figure}

In the previous examples we have studied nonblind case where we have information about the structure of 
the PSF. In this example, we study the blind case. For this purpose, we  study the 
Satellite ($128 \times 128$) image with a peak $1e03$ and  the following kernels: a motion PSF with a length of 
$15$ and an angle of $45^\circ$ (kernel 1)  and a Gaussian PSF  of
size ($7 \times 7$) with a standard deviation of 
$10$ (kernel 2). 
The results of the proposed blind algorithm and its comparison with  the EM  and FOTV
algorithms are given in Table \ref{tabEx4}. Also, the restored image and the 
calculated PSF are given in Figure \ref{figEx4}. According to the results, 
it can be seen that the proposed method has an acceptable performance in restoring both the image and the PSF.
To check the convergence of the proposed method, the changes in the error rate for the image and
 the obtained kernel are presented in Figure \ref{figEx4}. This figure also shows the rate of 
 change of the PSNR. As can be observed from these figures, as the number of iterations of 
 the proposed method increases, the error rate gradually decreases, and the PSNR value approaches a constant number.
\begin{center}
	\begin{table}[H]
		\caption{\small{Compared results of	PSNR and MSSIM for different
				kernels of Satellite ($128 \times 128$).}}
		\label{tabEx4}
		\centering
		\small
		\begin{tabular}{ccccccccccccccc}
			\hline
			& \multicolumn{2}{c}{Kernel 1} &\multicolumn{2}{c}{Kernel 2}  \\
			\cmidrule(l){2-3} \cmidrule(l){4-5} \cmidrule(l){6-7}  \cmidrule(l){8-9} 
			 Method &PSNR &MSSIM &PSNR &MSSIM\\
			\cmidrule(l){1-1} 		\cmidrule(l){2-3} \cmidrule(l){4-5}  
			EM \cite{29}  &20.5247&0.8070&22.646&0.8296 \\
			FOTV \cite{29} &24.7858&\textbf{0.8975}&23.952 &0.8578 \\
			Proposed     &\textbf{26.6419} &0.8920&\textbf{25.094} & \textbf{0.8593}\\
			\hline
		\end{tabular}
	\end{table}
\end{center}

\begin{figure}[H]
	\centering
	\makebox[0pt]{
		\subfigure[]{\label{fig:tiger}\includegraphics[width=0.25\textwidth]{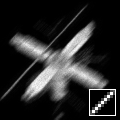}}
		\subfigure[]{\label{fig:mouse}\includegraphics[width=0.25\textwidth]{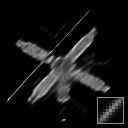}}
			\subfigure[]{\label{fig:mouse}\includegraphics[width=0.25\textwidth]{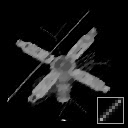}}
					\subfigure[]{\label{fig:mouse}\includegraphics[width=0.25\textwidth]{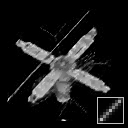}}}
	\\
	\vspace{-.3cm}
	\makebox[0pt]{
		\subfigure[]{\label{fig:tiger}\includegraphics[width=0.25\textwidth]{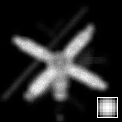}}
\subfigure[]{\label{fig:mouse}\includegraphics[width=0.25\textwidth]{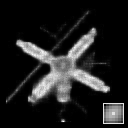}}
\subfigure[]{\label{fig:mouse}\includegraphics[width=0.25\textwidth]{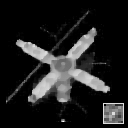}}
\subfigure[]{\label{fig:mouse}\includegraphics[width=0.25\textwidth]{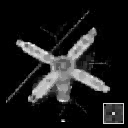}}}
	\vspace{-.2cm}
	\caption{Visual comparison of Satellite ($128 \times 128$) image with peak 1e03: (a), (e) blurred and noisy versions using Kernel 1 and Kernel 2,respectively;
		(b),(f) restored images by EM \cite{29}; (c),(g) restored images by FOTV \cite{29}; (d),(h)  restored images by proposed method.
}
	\label{figEx4}
\end{figure}

\begin{figure}[H]
	\centering
	\makebox[0pt]{
		\subfigure[]{\label{fig:gull}\includegraphics[width=0.42\textwidth]{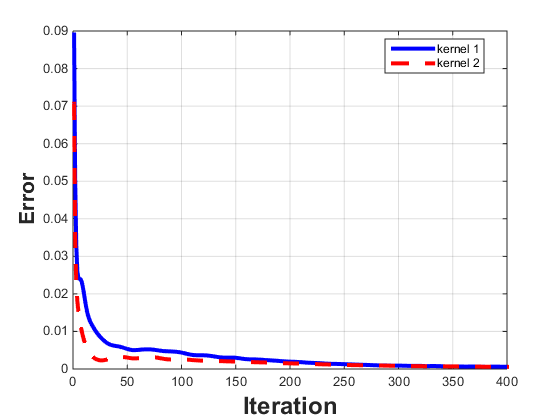}}
		\subfigure[]{\label{fig:tiger}\includegraphics[width=0.42\textwidth]{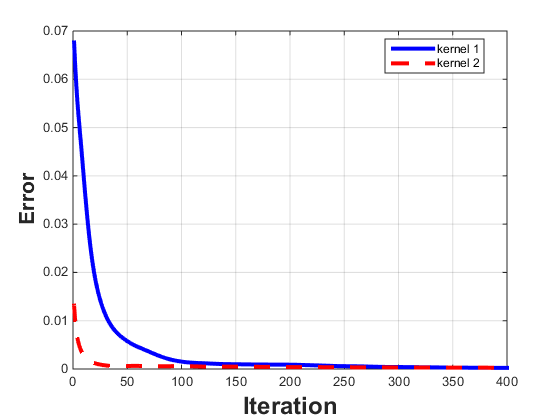}}
		\subfigure[]{\label{fig:mouse}\includegraphics[width=0.42\textwidth]{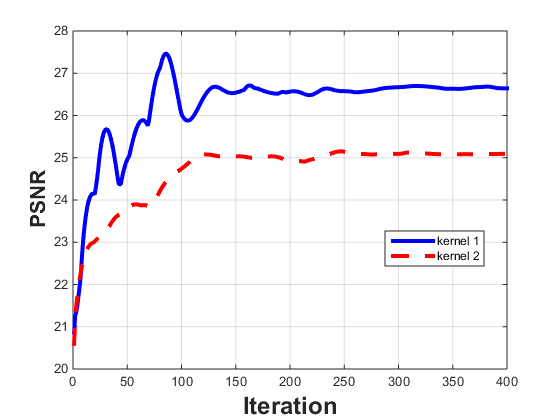}}}
	\vspace{-.2cm}
	\caption{ Error and PSNR curves results: (a) Error curves for restored image, (b) Error curves for restored PSF,
		 (c) PSNR curves for restored image.}
	\label{fig8}
\end{figure}

\section{Conclusion}\label{sec5}

In this paper, we have proposed a novel model based on  the minimax concave penalty and the reweighted
 $\ell_1$ norm for Poisson image denoising. The model is further extended to handle 
blind deconvolution problems, and an ADMM based algorithm is used for its 
numerical solution.We provide theoretical convergence analysis and validate the method through numerical experiments.
The obtained results confirm the  
performance of the proposed method in both accuracy and convergence behavior.

\end{document}